\newtheorem{thm}{Theorem} 
\newtheorem{lem}[thm]{Lemma} 
\newtheorem{prop}[thm]{Proposition} 
\newtheorem{definition}[thm]{Definition}
\newtheorem{cor}[thm]{Corollary}
\newtheorem{exm}[thm]{Example}
\newtheorem{rmk}[thm]{Remark}
\newcommand{\cerouno}{{[0,1]}}
\DeclareMathOperator{\Hom}{Hom}
\DeclareMathOperator{\FS}{FS}
\DeclareMathOperator{\SVFS}{SVFS}
\DeclareMathOperator{\IVFS}{IVFS}
\DeclareMathOperator{\CVFS}{CVFS}
\DeclareMathOperator{\T2FS}{T2FS}
\DeclareMathOperator{\HFS}{HFS}
\DeclareMathOperator{\THFS}{THFS}
\newcommand{\uno}{\mathbf{1}}
\newcommand{\dos}{\mathbf{2}}
\newcommand{\powerset}[1]{\dos^{#1}}
\newcommand{\Lat}{{\mathcal{L}at}}
\title[Lattice Embeddings between types of fuzzy sets]{Lattice embeddings between types of fuzzy sets. Closed-valued fuzzy sets}
\author[Lobillo]{F. J. Lobillo}
\author[Merino]{Luis Merino}
\author[Navarro]{Gabriel Navarro}
\thanks{The authors are listed in alphabetical order. All the authors contributed
equally to this work.}
\thanks{Corresponding author (Gabriel Navarro, email: gnavarro@ugr.es)}
\author[Santos]{Evangelina Santos}
\address[F. J. Lobillo]{Department of Algebra and CITIC, University of Granada}
\address[Luis Merino]{Department of Algebra, University of Granada}
\address[Gabriel Navarro]{Department of Computer Science and Artificial Intelligence and CITIC, University of Granada}
\address[Evangelina Santos]{Department of Algebra, University of Granada}
\email{jlobillo@ugr.es, lmerino@ugr.es, gnavarro@ugr.es, esantos@ugr.es}
\thanks{Gabriel Navarro has been partially supported by grant TIN2013-41990-R from the Ministerio de Econom\'{\i}a y Competitividad  of the Spanish Government and from Fondo Europeo de Desarrollo Regional FEDER}
\date{}
\begin{document}
\maketitle

\begin{abstract}
 In this paper we deal with the problem of extending Zadeh's operators on fuzzy sets (FSs) to interval-valued (IVFSs), set-valued (SVFSs) and type-2 (T2FSs) fuzzy sets. Namely, it is known that seeing  FSs  as SVFSs, or T2FSs, whose membership degrees are singletons is not order-preserving. We then describe a family of lattice embeddings from FSs to SVFSs. Alternatively, if the former singleton viewpoint is required, we reformulate the intersection on hesitant fuzzy sets and introduce what we have called closed-valued fuzzy sets.  This new type of fuzzy sets extends standard union and intersection on FSs. In addition, it allows handling together membership degrees of different nature as, for instance, closed intervals and finite sets. Finally, all these constructions are viewed as  T2FSs forming a chain of lattices.
\end{abstract}

\section{Introduction}
The relevance of the different types of fuzzy sets, as models to handle uncertainty, is worldwide recognized. This is yet another example of the continuing need for mathematical formalisms in computer science-related fields. Coming from Zadeh's notion of Fuzzy Set ($\FS$) \cite{Zadeh1965}, some of these types have been also defined in order to solve  the problem of the attribution of membership degree: whenever the objects of the universe of discourse are connected to imprecise concepts, the design the membership functions might be hard to achieve. Nevertheless, as solutions to specific problems of different nature, their algebraic structures seem to be disconnected or, at least, their relationships are not well-understood.

In the literature there exist some attempts aiming to establish a common framework where these types of fuzzy sets could be inserted gradually into a chain of lattice structures, see the recently published survey \cite{Bustince/etal:2016} and the references therein.  Nevertheless, for some cases, the compatibility between lattice structures and embedding maps is a tricky problem.  The origin of our work is motivated by the problem stated in \cite[Remark 2]{Bustince/etal:2016}: is there a lattice structure on the class of Set-Valued Fuzzy Sets ($\SVFS$s) \cite{Grattan1976} extending Zadeh's  union and intersection on $\FS$s? Actually, this question can also be stated in the realm of Type-2 Fuzzy Sets ($\T2FS$s) \cite{Zadeh1975}. The key-point here is how we may see $\FS$s as $\SVFS$s, or as $\T2FS$s.  In general, $\FS$s are viewed as $\SVFS$s for which membership degrees are given by a singleton on $[0, 1]$. This is not consistent with the standard lattice operators, and this is why it is proposed to change the algebraic structure.  Nevertheless, from an algebraic point of view, what is important  are the relations (lattice morphisms) between the objects (lattices) of the category.  That is, a different approach to a practical solution could then be to change the embedding and keep the lattice structures.

In this paper we study the embeddings between the lattice structures on some types of fuzzy sets. Concretely, $\FS$s and the generalizations given by Interval-Valued Fuzzy Sets ($\IVFS$s) \cite{Sambuc1975}, $\SVFS$s and $\T2FS$s. Our main goal is to present two possible solutions to the above question. One keeps the standard lattice structures whilst the other keeps the usual embedding from $\FS$s to  $\SVFS$s. In particular, the aims of our work are the following:
\begin{enumerate}
\item To show, from a mathematical perspective, why Zadeh's lattice operators on $\FS$s and set theoretical-derived operators on $\SVFS$s are not compatible by way of the embedding from $[0,1]$ into its power set as singletons.
\item To provide a collection of embeddings from $\FS$s to $\SVFS$s making these structures compatible.
\item To reformulate the meet operator on Hesitant Fuzzy Sets ($\HFS$s) \cite{Torra2010} in order to obtain a lattice structure for most of their practical applications.
\item To introduce Closed-Valued Fuzzy Sets ($\CVFS$s) in order to extends the lattice of $\FS$s via the embedding from $[0,1]$ into its power set as singletons.
\item To embed all these classes into $\T2FS$s forming a chain of lattice embeddings.
\end{enumerate}

The paper is organized in increasing order of generality of the different types of fuzzy sets. Namely, in Section \ref{fuzzy} we recall the notion of $\FS$ and explain the recurrent and basic categorical constructions of the paper. Readers non-familiar with category theory may consult, for instance, the reference \cite{Pierce1991}. Section \ref{IVFS} concerns $\IVFS$s. We then provide a family of embeddings from $\FS$s to $\IVFS$s respecting the usual meet and join operators. Sections \ref{SVFS} and \ref{CVFS} deal with the problem stated in \cite[Remark 2]{Bustince/etal:2016}. In the former we describe a family of lattice embeddings from $\FS$s to $\SVFS$s, where $\SVFS$s are endowed with the lattice operators derived from the set union and intersection. In the latter we show an alternative solution. We define the class of $\CVFS$s with a lattice structure inspired by the celebrated notion of $\HFS$. Then $\CVFS$s become a lattice which extends the lattice of $\FS$s through the ``singleton embedding''. Finally, in Section \ref{T2FS}, we embed the achieved chains of lattices inside the class of $\T2FS$s.

Due to the number of lattice structures we shall manage, we condense the notation used for the partial orders and the lattice operators in Table \ref{tablenot}. 
\begin{table*}[t]
  \centering 
   \renewcommand{\arraystretch}{1.5}
   \renewcommand{\thefootnote}{\fnsymbol{footnote}}
   \renewcommand\footnoterule{}
    \caption{Partial orders and lattice structures in this paper}\label{tablenot}
    \begin{minipage}{10cm}
  \begin{tabular}{lcccclccc}
  \toprule
  \multicolumn{4}{c}{Classes constructed from $[0,1]$} &  & \multicolumn{4}{c}{Types of fuzzy sets}  \\   \cmidrule{1-4}  \cmidrule{6-9}
   Class & Order & Join &Meet  & & Type & Order & Join &Meet  \\      \cmidrule{1-4}  \cmidrule{6-9}
$[0,1]$ & $\leq$   & $\vee$ & $\wedge$ & & $\FS(X)$ & $\leq_F$ & $\cup_F$ & $\cap_F$ \\
$\mathcal{I}([0,1])$ & $\leq_I$   & $\cup_I$ & $\cap_I$ & & $\IVFS(X)$ & $\sqsubseteq_I$ & $\sqcup_I$ & $\sqcap_I$ \\
$\mathcal{C}([0,1])$ & $\leq_C$   & $\cup_C$ & $\cap_C$ & &$\CVFS(X)$ & $\sqsubseteq_C$ & $\sqcup_C$ & $\sqcap_C$ \\
$\powerset{[0,1]}$ & $\subseteq$   & $\cup$ & $\cap$ & &\text{$\SVFS(X)$}\footnote[1]{Empty set, as membership degree, allowed} & $\sqsubseteq$ & $\sqcup$ & $\sqcap$ \\
$\powerset{[0,1]}\backslash \emptyset$\footnote[2]{\label{f1} It is only a partially ordered set, since $\cap_S$ is not a meet operator} & $\leq_S$   & $\cup_S$ & $\cap_S$ & &$\SVFS(X)$ & $\sqsubseteq_S$ & --  & -- \\ 
$[0,1]^{[0,1]}$ & $\leq_F$   & $\cup_F$ & $\cap_F$ & & $\T2FS(X)$ & $\sqsubseteq_{T2}$ & $\sqcup_{T2}$ & $\sqcap_{T2}$ \\ \bottomrule
  \end{tabular}
  \end{minipage}
\end{table*}
Throughout the paper we shall use the following notation. By \(\cerouno\) we denote the closed unit interval, \(\uno = \{\ast\}\) is the singleton, i.e. the set with one element, and \(\dos = \{0,1\}\) the two-element Boolean algebra. The set of maps from \(X\) to \(Y\) is denoted by \(Y^X\) or \(\Hom(X,Y)\); we use the second one when we want to highlight the categorical viewpoint. The power set of \(X\) is therefore denoted by \(\powerset{X}\) via identification with the characteristic maps.

\section{Fuzzy sets}\label{fuzzy}

In 1965, inspired by the idea of multivalued logic, Zadeh \cite{Zadeh1965} proposes the theory of Fuzzy Sets, an extension of the classical set theory, for some classes of objects whose criteria of membership are not precisely defined. In what follows, $X$ denotes a nonempty universe set. 

\begin{definition} A Fuzzy Set (FS) \(A\) on \(X\) is a  set mapping \(A : X \to \cerouno\). This is also called a Type-1 Fuzzy Set (T1FS). \end{definition}
That is, fuzzy sets on \(X\) can be described as 
$\FS(X) = \Hom(X,\cerouno)$,
the class of set maps from \(X\) to \(\cerouno\). 
The standard order $\leq$ on the unit interval \(\cerouno\) can be lifted pointwise to a partial order on \(\FS(X)\). For any \(A,B \in \FS(X)\) 
\[
A \leq_F B \iff A(x) \leq B(x) \text{ for all \(x \in X\)}.
\]
This inherited partial order is a particular case of a categorical construction. For an arbitrary nonempty set $S$, we may consider  the covariant endofunctor 
\begin{equation}\label{func}
\Hom(S,-):\mathcal{P}oset \to \mathcal{P}oset
\end{equation}
of the category of partially ordered sets (posets) which maps any poset $(P,\leq )$ to $\Hom(S,P)$ endowed with the partial order $\sqsubseteq$ defined by, for each $f,g\in \Hom(S,P)$, \[
f \sqsubseteq g \iff f(s) \leq g(s) \text{ for all \(s \in S\)}.
\]
The corresponding meet and join operators, minimum $\wedge$ and maximum $\vee$, associated to the order on [0,1] provide a lattice structure on $\FS(X)$ given by
\[
(A \cup_F B)(x) = A(x) \vee B(x) \text{ and } 
(A \cap_F B)(x) = A(x) \wedge B(x),
\]
for each $A,B\in \FS(X)$ and any $x\in X$. In general, for any lattice $(L,\wedge,\vee)$ and any nonempty set $S$, $\Hom(S,L)$ becomes a lattice with meet and join operators defined pointwise, i.e.
$$
(f \sqcup g)(s) = f(s) \vee g(s) \text{ and }
(f \sqcap g)(s) = f(s) \wedge g(s),
$$
for any $f,g \in \Hom(S,L)$ and any $s\in S$. This simply means that (\ref{func}) can be restricted to an endofunctor
\begin{equation}\label{func2}
\Hom(S,-):\mathcal{L}at \to \mathcal{L}at
\end{equation}
on the category of lattices. In addition, if $S$ is a chain (or totally ordered set), the set formed by all lattice maps $\Hom_{\mathcal{L}at}(S,L)$ from $S$ to $L$ becomes a lattice by restricting the operators $\sqcup$ and $\sqcap$. In such a case we may also consider the subfunctor
\begin{equation}\label{func3}
\Hom_{\mathcal{L}at}(S,-):\mathcal{L}at \to \mathcal{L}at.
\end{equation}

The practical utility of $\FS$s depends heavily on a suitable estimation of the membership degrees and, in some cases, it becomes a hard problem. Zadeh himself realizes this point in \cite{Zadeh1975} and suggests to make use of more complex objects as membership degrees rather than a solely real value. Several extensions (types) of FSs have arisen from this idea. Mainly, the generalizations invoke implicitly the $\Hom$ functor of (\ref{func2}), or the $\Hom_{\mathcal{L}at}$ functor of (\ref{func3}), for a lattice $L$ constructed from $[0,1]$. In the forthcoming sections we describe and relate those concerning this work.

\section{Interval-Valued Fuzzy Sets}\label{IVFS}

A closed interval in \(\cerouno\) is given by a couple of elements \(\alpha, \beta \in \cerouno\) such that \(\alpha \leq \beta\). So the set of closed intervals is in one-to-one correspondence with the set of lattice maps \(\Hom_{\Lat}(\dos,\cerouno)\).  Here, each \(f \in \Hom_{\Lat}(\dos,\cerouno)\) is identified with the interval $[f(0),f(1)]$. We shall denote by $\mathcal{I}([0,1])$ the class of all closed intervals in $[0,1]$. The idea of assigning a closed interval as membership degree can be found in Zadeh's paper \cite{Zadeh1975}, although it is also introduced independently by Grattan-Guiness \cite{Grattan1976}, Jahn \cite{Jahn1975} and Sambuc \cite{Sambuc1975}. Many practical applications related to this concept have been developed, see e.g. \cite[Section VI]{Bustince/etal:2016}. Formally:
\begin{definition}

An Interval-Valued Fuzzy Set ($\IVFS$) $A$ on \(X\) is a mapping $A : X \to \mathcal{I}([0,1])$, i.e. the set of all $\IVFS$s on $X$ is
$\IVFS(X) = \Hom\big(X,\Hom_{\Lat}(\dos,\cerouno)\big)$.
\end{definition}

As pointed out in Section \ref{fuzzy}, there exists a partial order $\sqsubseteq_I$ on IVFS\((X)\) inherited from a partial order $\leq_I$ on \(\Hom_{\Lat}(\dos,\cerouno)\), which is lifted pointwise from the one on [0,1]. Concretely, given two closed intervals $[a,b]$ and $[c,d]$, 
$$[a,b]\leq_I [c,d] \iff a\leq c \text{ and } b\leq d$$
and, given $A,B\in \IVFS(X)$, 
$$A\sqsubseteq_I B \iff A(x) \leq_I B(x) \text{ for any $x\in X$}.$$ 

The lattice structure on $\IVFS(X)$ comes from the lattice structure on \(\Hom_{\Lat}(\dos,\cerouno)\), which is lifted from  \(\cerouno\). Namely, for each closed intervals $[a,b]$ and $[c,d]$,
$$[a,b] \cup_I [c,d] = [a\vee c, b\vee d]  \text{ and } 
\text{$[a,b]   \cap_I [c,d] = [a \wedge c, b\wedge d]$},
$$
and then, for each $A,B\in \IVFS(X)$,
$$
(A \sqcup_I B)(x) = A(x) \cup_I B(x) \text{ and } 
(A \sqcap_I B)(x) = A(x) \cap_I B(x),
$$
 for any $x\in X$. These lattice structures are considered, for example, in \cite{Barrenechea2011}, \cite{Bustince1995}, \cite{Dubois1979} (by using the extension
principle applied to $\vee$ and $\wedge$) or \cite{Dubois2005}.
From a categorical perspective, any one-to-one lattice map $$h:([0,1],\vee,\wedge)\to (\Hom_{\mathcal{L}at}(\dos,[0,1]),\cup_I,\cap_I)$$ yields a lattice embedding $$\Hom(X,h):(\FS(X),\cup_F,\cap_F) \to (\IVFS(X),\sqcup_I,\sqcap_I)$$ by applying the functor (\ref{func2}) to $h$. Concretely, $\Hom(X,h)(A)=h\circ A$ for each $A\in \FS(X)$.
This is an embedding because (\ref{func2}) preserves monomorphisms, see \cite{Pierce1991}. 
Here, it is interesting to see $\Hom_{\mathcal{L}at}(\dos,[0,1])$ as a sublattice of $\Hom(\dos,[0,1])\equiv [0,1]^2$, the unit square in $\mathbb{R}^2$, with the product order. Therefore, $$\Hom_{\mathcal{L}at}(\dos,[0,1])\equiv \{(x,y)\in [0,1]^2 \text{ such that } x\leq y\}=T.$$
So, in order to find a lattice embedding from $\FS$s to $\IVFS$s with respect to the above-mentioned lattice structures, it is enough to fix a lattice embedding from $[0,1]$ to the upper triangle $T$. A simple way of finding one of such an embedding is fixing two increasing maps $$h_1,h_2:[0,1]\to [0,1]$$ such that $h_1(t)\leq h_2(t)$ for any $t\in [0,1]$, and one of them is strictly increasing.  The embedding $h$ is then defined as  $h(t)=(h_1(t),h_2(t))$ for any $t\in [0,1]$. Moreover, for any $A\in \FS(X)$ and any $x\in X$, the associated closed interval is $[h_1(A(x)),h_2(A(x))]$. For instance, set $h_1,h_2,h_3:[0,1]\to [0,1]$ defined by $h_1(t)=t$, $h_2(t)=0$ and $h_3(t)=1$ for any $t\in [0,1]$. Hence, we obtain the lattice embeddings
$$\phi,\omega , \gamma:([0,1],\vee,\wedge) \to (\mathcal{I}([0,1]),\cup_I,\cap_I)$$
given by $\phi(t)=[h_1(t),h_1(t)]=[t,t]$, $\omega(t)=[h_1(t),h_3(t)]=[t,1]$ and $\gamma(t)=[h_2(t),h_1(t)]=[0,t]$
for any $t\in [0,1]$, see Figure \ref{Fig2}(a)--(c).
\begin{figure*}[tb]
\begin{center}
\subfigure[Embedding $\phi$]{
\begin{tikzpicture}[scale=2.9]
\draw[dashed, ->] (0,-0.1) -- (0,1.1); 
\draw[dashed, ->] (-.1,0) -- (1.1,0); 
\node at (-.15,0.5) {\scriptsize{$t$}};
\draw[black, densely dotted] (0.5,0)--(0.5,0.5);
\draw[black, densely dotted] (-.1,0.5) -- (0.5,0.5);
\node at (0.5,-.07) {\scriptsize{$t$}};
\node at (-.05,1) {\scriptsize{$1$}};
\node at (1,-0.05) {\scriptsize{$1$}};
\node at (-.05,-0.05) {\scriptsize{$0$}};
\draw (0,0) -- (1,1); 
\draw[dotted] (0,1) -- (1,1); 
\draw[black, line width=1.5pt] (0,0) -- (1,1);
\draw (0.5,0.5) node [circle,fill, inner sep=1.5pt]{};
\end{tikzpicture}} \,
\subfigure[Embedding $\omega$]{
\begin{tikzpicture}[scale=2.9]
\draw[dashed, ->] (0,-0.1) -- (0,1.1); 
\draw[dashed, ->] (-.1,0) -- (1.1,0); 
\node at (-.15,1) {\scriptsize{$1$}};
\node at (1,-0.05) {\scriptsize{$1$}};
\node at (-.05,-0.05) {\scriptsize{$0$}};
\draw[dotted] (0,0) -- (1,1); 
\node at (0.5,-.07) {\scriptsize{$t$}};
\node at (-.15,0.5) {\scriptsize{$t$}};
\draw[black, densely dotted] (0.5,0)--(0.5,1);
\draw[black, densely dotted] (-.1,0.5) -- (0.5,0.5);
\draw[black, densely dotted] (-.1,1) -- (0.5,1);
\draw[black, line width=1.5pt] (0,1) -- (1,1);
\draw (0.5,1) node [circle,fill, inner sep=1.5pt]{};
\draw[<->] (-.05,0.51) -- (-.05,0.99);
\end{tikzpicture}}
\subfigure[Embedding $\gamma$]{
\begin{tikzpicture}[scale=2.9]
\draw[dashed, ->] (0,-0.1) -- (0,1.1); 
\draw[dashed, ->] (-.1,0) -- (1.1,0); 
\node at (-.05,1) {\scriptsize{$1$}};
\node at (1,-0.05) {\scriptsize{$1$}};
\node at (-.15,0) {\scriptsize{$0$}};
\draw[black, densely dotted] (0.5,0)--(0.5,0.5);
\draw[black, densely dotted] (-.1,0.5) -- (0.5,0.5);
\node at (0.5,-.07) {\scriptsize{$t$}};
\node at (-.15,0.5) {\scriptsize{$t$}};
\draw[dotted] (0,0) -- (1,1); 
\draw[dotted] (0,1) -- (1,1); 
\draw (0,0) -- (0,1); 
\draw[black, line width=1.5pt] (0,0) -- (0,1);
\draw (0,0.5) node [circle,fill, inner sep=1.5pt]{};
\draw[<->] (-.05,0.01) -- (-.05,0.49);
\end{tikzpicture}
}
\subfigure[Generic embedding derived from the graph of a set map $f$]{
\begin{tikzpicture}[scale=2.9]
\draw[dashed, ->] (0,-0.1) -- (0,1.1); 
\draw[dashed, ->] (-.1,0) -- (1.1,0); 
\node at (-.05,1) {\scriptsize{$1$}};
\node at (1,-0.05) {\scriptsize{$1$}};
\node at (-.05,-0.05) {\scriptsize{$0$}};
\draw[dotted] (0,0) -- (1,1); 
\draw[dotted] (0,1) -- (1,1); 
\draw[black, line width=1.5pt] plot [smooth] coordinates {(0,0) (0.3,0.7) (1,1)};
\draw[black, densely dotted] (-.1,0.3) -- (0.3,0.3);
\draw[black, densely dotted] (-.1,0.7) -- (0.3,0.7);
\draw[black, densely dotted] (0.3,0) -- (0.3,0.7);
\node at (-.15,0.3) {\scriptsize{$t$}};
\node at (-.2,0.7) {\scriptsize{$f(t)$}};
\draw (0.3,0.7) node [circle,fill, inner sep=1.5pt]{};
\node at (0.3,-0.05) {\scriptsize{$t$}};
\draw[<->] (-.05,0.31) -- (-.05,0.69);
\end{tikzpicture}
}
\end{center}
\caption{Lattice embeddings from $[0,1]$ to $\mathcal{I}([0,1])$.}\label{Fig2}
\end{figure*}
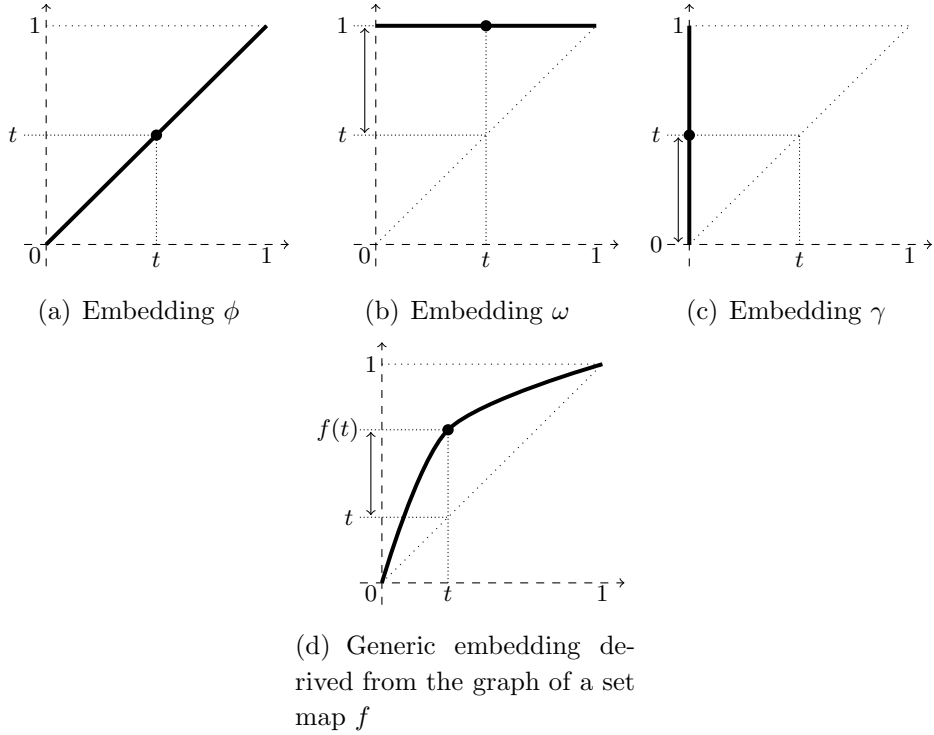
The first two examples are instances of the same construction. Namely, we may always consider the graph of an increasing set map $f:[0,1]\to[0,1]$ such that $f(t)\geq t$. In this case, the embedding is defined by $h(t)=(t,f(t))$ for any $t\in [0,1]$, see Figure \ref{Fig2}(d). These maps induce the corresponding lattice embeddings 
\[\Phi, \Omega, \Lambda : (\FS(X),\cup_F,\cap_F) \longrightarrow (\IVFS(X),\sqcup_I,\sqcap_I)\]
defined pointwise as $\Phi(A)(x)=[A(x),A(x)]$, $\Omega(A)(x)=[A(x),1]$ and $\Lambda(A)(x)=[0,A(x)]$ for any $A\in \FS(X)$ and any $x\in X$.  
 Observe that $\Phi$ is the inclusion of $\FS$s into $\IVFS$s pointwise-derived from seeing an element in $[0,1]$ as a singleton interval.

\section{Set-Valued Fuzzy Sets}\label{SVFS}

The idea of representing the plausible range of the membership degrees by means of closed intervals has achieved a relative success, see \cite[Section 4]{Bustince2010} and the references therein. An obvious generalization consists in allowing to choose an arbitrary nonempty subset of $[0,1]$, yielding the notion of set-valued fuzzy set. The following definition is given in \cite{Grattan1976}.
\begin{definition}\label{svfs}
A Set-Valued Fuzzy Set (SVFS) \(A\) on \(X\) is a set mapping $A : X \to \powerset{\cerouno}\backslash \{\emptyset\}$. Hence 
$
\SVFS(X) = \Hom\big(X,\powerset{\cerouno}\backslash \{\emptyset\}\big).
$
\end{definition}
The insertion of $\SVFS$s, as another piece of the chain of lattices of types of fuzzy sets, is somehow problematic.  Actually, in \cite[Remark 2]{Bustince/etal:2016} it is formulated the open problem of finding a lattice structure on the class of $\SVFS$s such that the restriction to $\FS$s preserves Zadeh's intersection and union. Let us explain the partial solutions that we propose in this work. In this section we consider $\SVFS$s endowed with the lattice structure derived from  set intersection and union, whilst, in the next one, we shall analyze another meet and join operators.

\subsection{Lattice embeddings from $\FS(X)$ to $\SVFS(X)$}

The power set of $[0,1]$ is endowed  with the usual lattice structure defined by the set union $\cup$ and intersection $\cap$. Hence, by virtue of (\ref{func2}), $\Hom(X,\powerset{\cerouno})$ becomes a lattice with the structure defined by the operators
$$
(A \sqcup B)(x) = A(x) \cup B(x) \text{ and } 
(A \sqcap B)(x) = A(x) \cap B(x),
$$
for each $A,B\in \Hom(X,\powerset{\cerouno})$ and any $x\in X$. Nevertheless, in general, $\SVFS(X)$ is not closed under these operators. Indeed, given $A,B\in \SVFS(X)$, there could exist an $x\in X$ such that $A(x)\cap B(x)=\emptyset$, the reader may find easily examples verifying this, so that $A\sqcap B$, according to Definition \ref{svfs}, is not in $\SVFS(X)$. This is a slight mistake committed in \cite[Proposition 5.1]{Bustince/etal:2016}, where it is asserted that $(\SVFS(X),\sqcup,\sqcap)$ is a complete lattice. 

An elementary solution consists in extending the notion of $\SVFS$ by allowing the empty set as a possible membership degree. A possible interpretation of this degree would be a nonsense \cite{Torra2009}. Hence, under this condition, $(\SVFS(X),\sqcup,\sqcap)$ is a complete lattice. 
Via the characteristic maps, this is to say we see $A$ and $B$ belonging to $\Hom(X,\Hom([0,1],\dos))$, we may write
$$
(A \sqcup B)(x)(t) = A(x)(t) \vee B(x)(t) \text{ and } 
(A \sqcap B)(x)(t) = A(x)(t) \wedge B(x)(t),
$$
for all  \(x \in X\) and \(t \in \cerouno\). So, this lattice structure on \(\SVFS(X)\) comes from the  Boolean algebra \(\dos\). This conclusion also follows from the natural isomorphism provided by the adjunction between the functors $\Hom([0,1],-)$ and $-\times [0,1]$,
\begin{equation}\label{adjunction}
\Hom\big(X,\powerset{\cerouno}\big) \cong \Hom(X \times \cerouno, \dos).
\end{equation}
As observed in \cite{Bustince/etal:2016}, $\FS$s can be viewed as $\SVFS$s via the injective map $\iota : \FS(X) \to  \SVFS(X)$ given by $\iota(A)(x)=\{A(x)\}$, for any $A\in \FS(x)$ and $x\in X$. This map is simply the composition
$$\xymatrix{\FS(X) \ar[r]^-{\Phi} & \IVFS(X) \ar[r]^-{i}  & \SVFS(X),
}$$
where $\Phi$ is the lattice embedding provided in Section \ref{IVFS} and $i$ is defined by lifting the identification of a closed interval as a subset of $[0,1]$.

By (\ref{func2}), the insertion of \(\cerouno\) inside \(\powerset{\cerouno}\) as singleton sets implies the inclusion
\[
\Hom(X,\cerouno) \subseteq \Hom(X,\powerset{\cerouno})
\]
and the map \(\iota\) is a description of this inclusion. Nevertheless, it is not a lattice injection. The isomorphism (\ref{adjunction}) 
explains why the lattice structures are not compatible: The lattice structure on \(\FS(X)\) is lifted from \(\cerouno\) applying (\ref{func2}), and the lattice structure on \(\SVFS(X)\) is inherited from \(\dos\). Actually, this is also the reason of why $i$ is not a lattice map. So, in order to make compatible the lattice structures, we have to look for a new way to embed \(\FS(X)\) inside \(\SVFS(X)\). 

Let \(A \in \FS(X)\), for any \(t \in \cerouno\), the level $t$-cut of $A$ is defined by 
$A_t= \{x \in X ~|~ A(x) \geq t\}$.
The class of all level $t$-cuts of $A$ is a chain, since \(t \leq s\) implies \(A_s \subseteq A_t\). Then $A$ can also be viewed as a lattice map 
\[A : (\cerouno^{\mathsf{op}},\wedge,\vee) \to (\powerset{X},\cup,\cap),\] 
where by \(\cerouno^{\mathsf{op}}\) we mean the set $[0,1]$ with the opposite order. Concretely, there is a bijective map
\begin{equation}\label{cutsident}
\begin{split}
c : \Hom(X,\cerouno) &\longrightarrow \Hom_{\Lat}(\cerouno^{\mathsf{op}},\powerset{X}) \\
A &\longmapsto \left[ t \mapsto A_t \right].
\end{split}
\end{equation}
Zadeh's lattice structure on $\FS(X)$ is reflected, via $c$, from the following one: for each \(A,B \in \Hom_{\Lat}(\cerouno^{\mathsf{op}},\powerset{X})\),
$$
(A \cup_{*} B)_t = A_t \cup B_t \text{ and }
(A \cap_{*} B)_t = A_t \cap B_t,
$$
for any $t\in [0,1]$. So, from this viewpoint, the lattice structure on \(\FS(X)\) is inherited from the lattice structure of the power set \(\powerset{X}\) and, whence, from $\dos$.  Since the map \(c\) is a bijective  map, the composition
$$
\begin{array}{c}
\FS(X)\overset{c}{\cong} \Hom_{\Lat}(\cerouno^{\mathsf{op}},\powerset{X}) \subseteq \Hom(\cerouno,\powerset{X}) 
\overset{(\ref{adjunction})}{\cong} \Hom(\cerouno \times X,\dos)\cong \\ \Hom(X\times \cerouno,\dos) \overset{(\ref{adjunction})}{\cong} \SVFS(X),\end{array}
$$
provides an embedding 
$$\Theta: (\FS(X),\cup_F,\cap_F)\to (\SVFS(X),\sqcup,\sqcap)$$
with $\Theta(A)(x)=[0,A(x)]$ for each $A\in \FS(X)$ and $x\in X$.
Observe that the lattice structures of all intermediate sets come from $\dos$. Therefore, as a composition of lattice maps, $\Theta$ is also a lattice map, and so, a lattice embedding. Nevertheless, for the convenience, here we show a direct proof.
\begin{thm}\label{FSinSVFS}
The map \(\Theta\) is a lattice embedding. 
\end{thm}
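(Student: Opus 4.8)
The plan is to verify directly the three conditions defining a lattice embedding: injectivity, preservation of the join, and preservation of the meet. Everything rests on the explicit description $\Theta(A)(x) = [0,A(x)] = \{t \in \cerouno \mid t \leq A(x)\}$, so the first thing I would record are the two elementary facts about initial segments of the chain $\cerouno$: for any $a,b \in \cerouno$,
$$[0,a] \cup [0,b] = [0, a \vee b] \quad \text{and} \quad [0,a] \cap [0,b] = [0, a \wedge b].$$
These hold simply because the lower sets $[0,a]$ and $[0,b]$ are nested (according as $a \leq b$ or $b \leq a$), so their union is the larger of the two and their intersection the smaller.

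For injectivity, I would suppose $\Theta(A) = \Theta(B)$, so that $[0,A(x)] = [0,B(x)]$ for every $x \in X$; taking the right endpoint (the supremum) of each segment yields $A(x) = B(x)$ for all $x$, hence $A = B$. For preservation of joins I would evaluate both sides at an arbitrary $x \in X$: on one side $\Theta(A \cup_F B)(x) = [0,(A \cup_F B)(x)] = [0, A(x) \vee B(x)]$, and on the other $(\Theta(A) \sqcup \Theta(B))(x) = \Theta(A)(x) \cup \Theta(B)(x) = [0,A(x)] \cup [0,B(x)]$. The first interval identity makes these equal. The meet case is identical via the second identity: $\Theta(A \cap_F B)(x) = [0, A(x) \wedge B(x)] = [0,A(x)] \cap [0,B(x)] = (\Theta(A) \sqcap \Theta(B))(x)$.

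I do not expect a genuine obstacle here; the statement reduces to a routine pointwise check. The only point worth flagging is keeping straight the two distinct origins of the lattice structures being compared, since confusing them is exactly the error the paper is correcting: the operators $\cup_F, \cap_F$ on $\FS(X)$ are lifted from the max/min on $\cerouno$, whereas $\sqcup, \sqcap$ on $\SVFS(X)$ come from set-theoretic union and intersection. The entire content of the theorem is that the assignment $a \mapsto [0,a]$ transports the former into the latter, and this is precisely what the two interval identities encode — the concrete shadow of the chain of canonical isomorphisms preceding the statement, all of whose terms carry the lattice structure inherited from $\dos$.
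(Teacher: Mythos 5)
Your proof is correct and follows essentially the same route as the paper's: a direct pointwise verification of injectivity (reading off the right endpoint of $[0,A(x)]=[0,B(x)]$) together with the two nested-interval identities $[0,a]\cup[0,b]=[0,a\vee b]$ and $[0,a]\cap[0,b]=[0,a\wedge b]$, which the paper uses implicitly in its displayed chain of equalities. Your explicit isolation of those identities, and the closing remark about the two lattice structures having different origins, are fine glosses but do not change the argument.
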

\begin{proof}
Assume first \(\Theta(A) = \Theta(B)\) for some $A,B\in \FS(X)$. Then, for all \(x \in X\), \([0,A(x)]=[0,B(x)],\) so \(A(x) = B(x)\). Thus $A=B$ and \(\Theta\) is an injective map.  Now, 
\[
\begin{split}
\Theta(A \cup_F B)(x) &= [0, (A \cup_F B)(x)] \\
&= [0, A(x) \vee B(x)] \\
&= [0,A(x)] \cup [0,B(x)] \\
&= \Theta(A)(x) \cup \Theta(B)(x)\\
& =(\Theta(A)\sqcup \Theta(B))(x),
\end{split}
\]
for any $x\in X$. Similarly, $\Theta(A \cap_F B)=\Theta(A) \sqcap \Theta(B)$. So
 \(\Theta\) is a lattice map.
\end{proof}
The above discussion gives an answer to the open problem stated in \cite[Remark 2]{Bustince/etal:2016}. Nevertheless, this solution does not include $(\IVFS(X),\sqcup_I,\sqcap_I)$ as a lattice
between $\FS(X)$ and $\SVFS(X)$. Although a closed interval is a subset of \(\cerouno\), and then $i:\IVFS(X)\to \SVFS(X)$  is a one-to-one map, the lattice operations on $\SVFS(X)$ are not closed when they are restricted to $\IVFS(X)$. For instance, the union of intervals is not always an interval. So the question is if there exists an embedding from $\IVFS(X)$ to $\SVFS(X)$ coherent with these lattice structures. That is, can we provide a commutative lattice diagram 
$$\xymatrix{(\FS(X),\cup_F,\cap_F) \ar[rd]_-{\Theta} \ar[r]^-{\Phi} & (\IVFS(X),\sqcup_I,\sqcap_I) \ar@{-->}[d]\\ & (\SVFS(X),\sqcup,\sqcap)? }$$
 Next, we give a positive answer.

\subsection{A lattice embedding from $\IVFS(X)$ to $\SVFS(X)$}\label{SVFSb}

Let us denote by $\mathbb{Q}$ and $\mathbb{I}$ the set of rational and irrational numbers, respectively. Consider the map $$\xi: \mathcal{I}([0,1]) \longrightarrow \powerset{[0,1]}$$ defined by $\xi([a,b])=([0,a]\cap \mathbb{Q})\cup ([0,b]\cap \mathbb{I})$ for any closed interval $[a,b]$. All along this section we shall need the following well-known properties concerning the real numbers, see for instance \cite[Theorem 2.6.13]{bloch2011}.
\begin{align}
&\begin{array}{l}
\textbf{Density of rationals}\text{. Given $a,b\in \mathbb{R}$ with $a<b$,}   \\ \text{there exists a rational $q\in \mathbb{Q}$ such that $a<q<b$.}
\end{array}
\label{densQ}
\\
&\begin{array}{ll}
\textbf{Density of irrationals}\text{. Given $a,b\in \mathbb{R}$ with $a<b$,}  \\ \text{there exists an irrational $p\in \mathbb{I}$ such that $a<p<b$.}
\end{array}
\label{densI}
\end{align}

\begin{lem}\label{density}
Let $I,J\in \mathcal{I}([0,1])$ such that $I$ is not a singleton, the following are equivalent:
\begin{enumerate}[$i)$]
\item $I\subseteq J$.
\item $I\cap \mathbb{Q}\subseteq J\cap \mathbb{Q}$.
\item $I\cap \mathbb{I}\subseteq J\cap \mathbb{I}$.
\end{enumerate}
\end{lem}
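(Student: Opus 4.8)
The implications $(i)\Rightarrow(ii)$ and $(i)\Rightarrow(iii)$ are immediate: intersecting both sides of an inclusion with a fixed set preserves it, so $I\subseteq J$ yields $I\cap\mathbb{Q}\subseteq J\cap\mathbb{Q}$ and $I\cap\mathbb{I}\subseteq J\cap\mathbb{I}$. The content of the lemma therefore lies in the two converse directions $(ii)\Rightarrow(i)$ and $(iii)\Rightarrow(i)$, and the hypothesis that $I$ is not a singleton will be essential precisely there. Throughout I would write $I=[a,b]$ with $a<b$ (this is where the non-singleton assumption enters) and $J=[c,d]$, so that proving $I\subseteq J$ amounts to checking $c\le a$ and $b\le d$.

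For $(ii)\Rightarrow(i)$, my plan is to argue by contradiction using the density of the rationals (\ref{densQ}). First note that $I\cap\mathbb{Q}\subseteq J\cap\mathbb{Q}\subseteq J$. Suppose $a<c$; since also $a<b$, the number $\min\{c,b\}$ is strictly greater than $a$, and by (\ref{densQ}) there is a rational $q$ with $a<q<\min\{c,b\}$. Then $q\in(a,b)\subseteq I$, so $q\in I\cap\mathbb{Q}$, whereas $q<c$ forces $q\notin J$, contradicting $I\cap\mathbb{Q}\subseteq J$. Hence $c\le a$. Symmetrically, assuming $b>d$ and applying (\ref{densQ}) to the nonempty open interval $(\max\{a,d\},b)$ produces a rational in $I$ that lies outside $J$, again a contradiction; so $b\le d$. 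Together these give $I\subseteq J$.

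The implication $(iii)\Rightarrow(i)$ follows by the same argument verbatim, replacing every appeal to (\ref{densQ}) by the density of the irrationals (\ref{densI}) and choosing irrational witnesses $p$ instead of rational ones $q$; the non-singleton hypothesis again guarantees that the relevant open subintervals of $I$ are nonempty and hence contain an irrational.

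I do not expect a serious obstacle here: the only subtlety, and the place where one must stay alert, is the role of the assumption that $I$ is not a singleton. If $I=\{a\}$ were allowed, then for $a$ irrational one would have $I\cap\mathbb{Q}=\emptyset\subseteq J\cap\mathbb{Q}$ trivially even when $a\notin J$, so $(ii)\Rightarrow(i)$ would fail; the nondegeneracy of $I$ is exactly what makes $I\cap\mathbb{Q}$ (and $I\cap\mathbb{I}$) dense in $I$, and this density is the engine of both converses. Alternatively, one could package the same idea topologically: since $a<b$, the set $I\cap\mathbb{Q}$ is dense in $I$, so $\overline{I\cap\mathbb{Q}}=I$; as $I\cap\mathbb{Q}\subseteq J$ and $J$ is closed, taking closures gives $I\subseteq J$, with the irrational case identical.
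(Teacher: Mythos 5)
Your proposal is correct and follows essentially the same route as the paper: both converses are proved by producing, via the density of $\mathbb{Q}$ (resp.\ $\mathbb{I}$), a rational (resp.\ irrational) witness in $I\setminus J$ whenever an endpoint inequality fails, your choice of witnesses in $(a,\min\{c,b\})$ and $(\max\{a,d\},b)$ merely streamlining the paper's intermediate step of first establishing $a<d$ and $c<b$. Your closing remark --- that $\overline{I\cap\mathbb{Q}}=I$ and $J$ is closed, so $I\subseteq\overline{J}=J$ --- is a clean equivalent repackaging, though not the argument the paper uses.
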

\begin{proof}
$i)\Rightarrow ii)$ and $i)\Rightarrow iii)$ are trivial. We prove $ii)\Rightarrow i)$. Let $I=[a,b]$ and $J=[c,d]$, with $a\not = b$, and $I\cap \mathbb{Q}\subseteq  J$. Let us first prove that $a<d$ and $c<b$. Since $a<b$, by (\ref{densQ}), there exists $q\in \mathbb{Q}$ such that $a<q<b$. Therefore $q\in I\cap \mathbb{Q}\subseteq J=[c,d]$, so $c\leq q <b$ and $a<q\leq d$. 

Now, we prove $c\leq a$ and $b\leq d$. Suppose that $a<c<b$. By (\ref{densQ}), there exists $q\in \mathbb{Q}$ such that $a<q<c$. Then $q$ is in $I\cap \mathbb{Q}$, but not in $J$. A contradiction, so $c\leq a$. Similarly, if $d<b$, there exists $q\in \mathbb{Q}$ such that $d<q<b$. Since $a<d$, $q\in I\cap \mathbb{Q}\subseteq J$, and we get a contradiction. Then $b\leq d$. Therefore $[a,b]\subseteq [c,d]$.

$iii)\Rightarrow i)$ can be proved similarly by applying (\ref{densI}).
\end{proof}
\begin{thm}
The map $\xi:(\mathcal{I}([0,1]),\cup_I,\cap_I)\to (\powerset{[0,1]},\cup,\cap)$ is a lattice embedding
\end{thm}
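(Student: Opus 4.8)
The plan is to check the three conditions defining a lattice embedding: that $\xi$ is injective, that $\xi([a,b]\cup_I[c,d])=\xi([a,b])\cup\xi([c,d])$, and that $\xi([a,b]\cap_I[c,d])=\xi([a,b])\cap\xi([c,d])$. The single observation that drives everything is that $\mathbb{Q}$ and $\mathbb{I}$ partition $[0,1]$, so each value $\xi([a,b])$ is the disjoint union of a \emph{rational channel} $[0,a]\cap\mathbb{Q}$, governed by the left endpoint $a$, and an \emph{irrational channel} $[0,b]\cap\mathbb{I}$, governed by the right endpoint $b$. Since $[0,a]\cap\mathbb{Q}=\xi([a,b])\cap\mathbb{Q}$ and $[0,b]\cap\mathbb{I}=\xi([a,b])\cap\mathbb{I}$, the two channels can be recovered separately from $\xi([a,b])$, and the set operations on $\powerset{[0,1]}$ act on them independently.

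For injectivity, I would intersect $\xi([a,b])=\xi([c,d])$ with $\mathbb{Q}$ to obtain $[0,a]\cap\mathbb{Q}=[0,c]\cap\mathbb{Q}$; by the density of the rationals (\ref{densQ}) the supremum of either side is its right endpoint, forcing $a=c$ (this is also immediate from Lemma \ref{density} applied to $[0,a]$ and $[0,c]$). Intersecting instead with $\mathbb{I}$ and invoking (\ref{densI}) gives $b=d$, so $\xi$ is one-to-one. For the join, recall $[a,b]\cup_I[c,d]=[a\vee c,b\vee d]$; since intersecting with $\mathbb{Q}$ (resp. $\mathbb{I}$) distributes over unions and $[0,a]\cup[0,c]=[0,a\vee c]$, the rational channel of $\xi([a\vee c,b\vee d])$ is exactly the union of the two rational channels, and likewise for the irrational channel, so $\xi$ preserves $\cup_I$.

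The meet is the crux and the step I expect to require the most care. Writing $\xi([a,b])\cap\xi([c,d])$ and expanding, four terms appear; the two mixed terms (a rational channel intersected with an irrational channel) are empty precisely because $\mathbb{Q}\cap\mathbb{I}=\emptyset$, which is where the partition hypothesis is indispensable. The surviving rational--rational term is $([0,a]\cap[0,c])\cap\mathbb{Q}=[0,a\wedge c]\cap\mathbb{Q}$ and the irrational--irrational term is $[0,b\wedge d]\cap\mathbb{I}$, which together are exactly $\xi([a\wedge c,b\wedge d])=\xi([a,b]\cap_I[c,d])$; here one should also note that $[a\wedge c,b\wedge d]$ is a legitimate interval, since $a\le b$ and $c\le d$ give $a\wedge c\le b\wedge d$. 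Having shown $\xi$ is an injective lattice homomorphism, it is a lattice embedding. The main obstacle, then, is not a hard estimate but keeping the bookkeeping honest in the meet case, ensuring the mixed terms vanish and that no rational slips into the irrational channel or vice versa; this is exactly the independence of the two channels guaranteed by the disjoint decomposition $\mathbb{R}=\mathbb{Q}\sqcup\mathbb{I}$, and it is also the structural reason the lower sets $[0,a]$ and $[0,b]$ (rather than $[a,b]$ itself) are used, since they convert $\vee$ and $\wedge$ on endpoints into $\cup$ and $\cap$ of sets.
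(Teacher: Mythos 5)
Your proposal is correct and takes essentially the same route as the paper: the same decomposition of $\xi([a,b])$ into a rational channel $[0,a]\cap\mathbb{Q}$ and an irrational channel $[0,b]\cap\mathbb{I}$, the same distributivity computation for join and meet (the paper compresses your four-term expansion with vanishing mixed terms into a single line), and injectivity by recovering each endpoint from its channel via density. The only cosmetic difference is in injectivity, where your supremum argument handles the endpoint $0$ uniformly, whereas the paper treats the zero-endpoint case separately before invoking Lemma \ref{density}; note that your parenthetical appeal to that lemma would need the same caveat, since the lemma excludes singleton intervals such as $[0,0]$.
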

\begin{proof}
Let us fix $I=[a,b]$ and $J=[c,d]$ two closed intervals. Suppose first that $\xi([a,b])=\xi([c,d])$. Then, since $\mathbb{Q}$ and $\mathbb{I}$ are disjoint sets, 
$[0,a]\cap \mathbb{Q}=[0,c]\cap \mathbb{Q}$  and $[0,b]\cap \mathbb{I}=[0,d]\cap \mathbb{I}.$
If $a$ and $c$ are non zero, by Lemma \ref{density}, $[0,a]=[0,c]$, so $a=c$. Suppose one of them is zero. For instance, suppose that $a=0$. Hence $[0,c]\cap \mathbb{Q}=\{0\}$. By the density of $\mathbb{Q}$, it follows that  $c=0$. The equality $b=d$ may be proved similarly by applying Lemma \ref{density} and (\ref{densI}). Thus $\xi$ is an injective map. Now,
\[
\begin{split}
\xi(I)\cap \xi(J) &= (([0,a] \cap \mathbb{Q}) \cup ([0,b] \cap \mathbb{I})) \cap (([0,c] \cap \mathbb{Q}) \cup ([0,d] \cap \mathbb{I})) \\
&= ([0,a\wedge c]\cap \mathbb{Q})\cup ([0,b\wedge d]\cap \mathbb{I}) \\
 &= \xi([a\wedge c, b\wedge d]) \\
 &= \xi(I\cap_I J)
 \end{split}
\]
and, by a similar reasoning, $\xi(I)\cup \xi(J)=\xi(I \cup_I J)$. So $\xi$ is a lattice map.
\end{proof}

By applying the functor (\ref{func2}) to $\xi$, we get a lattice embedding
$$\Xi:(\IVFS(X),\sqcup_I,\sqcap_I)\to (\SVFS(X),\sqcup,\sqcap)$$
defined by $\Xi(A)=\xi\circ A$ for any $A\in \IVFS(X)$. Observe that the restriction to $\FS(X)$ maps an $A\in \FS(X)$ to $\Xi(A)\in \SVFS(X)$ defined by $x\mapsto [0,A(x)]$. Then, the following diagram of lattice maps is commutative:
$$\xymatrix{(\FS(X),\cup_F,\cap_F) \ar[rd]_-{\Theta} \ar[r]^-{\Phi} & (\IVFS(X),\sqcup_I,\sqcap_I) \ar[d]^-{\Xi}\\ & (\SVFS(X),\sqcup,\sqcap). }$$

\begin{cor}
The composition of $\Xi$ with any of the maps constructed in Section \ref{IVFS} provides a family of lattice embeddings from $(\FS(X),\cup_F,\cap_F)$ to $(\SVFS(X),\sqcup,\sqcap)$.
\end{cor}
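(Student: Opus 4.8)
The plan is to recognize that this corollary is an immediate consequence of the fact that lattice embeddings compose. Section~\ref{IVFS} produces a whole family of lattice embeddings $\Psi : (\FS(X),\cup_F,\cap_F) \to (\IVFS(X),\sqcup_I,\sqcap_I)$ — concretely $\Phi$, $\Omega$, $\Lambda$, and more generally the maps induced by any admissible pair of increasing functions $(h_1,h_2)$ with $h_1\leq h_2$ and one of them strictly increasing — while the preceding theorem establishes that $\Xi : (\IVFS(X),\sqcup_I,\sqcap_I) \to (\SVFS(X),\sqcup,\sqcap)$ is itself a lattice embedding. Thus the only thing left to verify is that, given these two ingredients, the composite $\Xi \circ \Psi$ is again a lattice embedding.

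First I would recall that a lattice embedding is, by definition, an injective lattice homomorphism, and that both these properties are stable under composition. Injectivity is clear: the composite of two injective maps is injective. For the homomorphism property, I would simply chain the two structural identities; for any $A, B \in \FS(X)$,
\[
(\Xi \circ \Psi)(A \cup_F B) = \Xi\big(\Psi(A) \sqcup_I \Psi(B)\big) = \Xi(\Psi(A)) \sqcup \Xi(\Psi(B)),
\]
using first that $\Psi$ respects joins and then that $\Xi$ does, and symmetrically for meets with $\cap_F$, $\sqcap_I$ and $\sqcap$. Hence $\Xi \circ \Psi$ preserves both lattice operations and is injective, i.e.\ it is a lattice embedding. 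Since this verification applies verbatim to every member of the family from Section~\ref{IVFS}, letting $\Psi$ range over all of them yields precisely the asserted family of embeddings $\FS(X) \to \SVFS(X)$.

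I expect no genuine obstacle here: all the substantive work has already been done: the previous theorem supplies that $\Xi$ is an embedding, and Section~\ref{IVFS} supplies that each $\Psi$ is one. The corollary merely packages the elementary categorical fact that the composite of two monomorphisms in $\mathcal{L}at$ is again a monomorphism. As an alternative, one could phrase the argument functorially, noting that $\Xi = \Hom(X,\xi)$ and that the functor (\ref{func2}) preserves monomorphisms, so that $\Xi \circ \Psi$ is a monomorphism of lattices; but the explicit one-line check above is already sufficient and perhaps more transparent.
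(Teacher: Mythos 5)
Your proposal is correct and matches the paper's (implicit) reasoning exactly: the corollary is stated without proof precisely because it follows from composing the lattice embedding $\Xi$ of the preceding theorem with any of the lattice embeddings of Section \ref{IVFS}, which is the composition argument you spell out. Your explicit verification that injectivity and preservation of $\sqcup$/$\sqcap$ are stable under composition, and your functorial alternative via $\Hom(X,\xi)$, are both faithful to the paper's categorical viewpoint.
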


\section{Closed-valued fuzzy sets}\label{CVFS}

Let us now develop an alternative solution to the problem in \cite[Remark 2]{Bustince/etal:2016}. In some practical situations it could be interesting  to insert FSs inside SVFSs via the  embedding $\iota$ defined in Section \ref{SVFS}. Therefore, in order to preserve Zadeh's operators on $\FS(X)$, we need to change the lattice structure on $\SVFS(X)$.  The most promising attempt in this direction is the notion of Hesitant Fuzzy Set ($\HFS$) defined by Torra in \cite{Torra2010}.  The class $\HFS(X)$ is constructed by  exchanging the lattice structure on $\SVFS(X)$ by the following operators:
$$\begin{array}{l}
(A \sqcup_H B)(x)  =  \{t \in A(x) \cup B(x) \text{ $|$ }  t\geq \underline{A(x)}\vee \underline{B(x)}\} \text{ and } \\
(A \sqcap_H B)(x)   =  \{t \in A(x) \cup B(x) \text{ $|$ }  t\leq \overline{A(x)}\wedge\overline{B(x)}\},
\end{array}
$$
for each $A,B\in \HFS(X)$ and any $x\in X$,
where, for any nonempty subset $S$ of $[0,1]$, we denote  by  $\underline{S}$ the infimum of  $S$ (in the natural order) and, by  $\overline{S}$, its supremum. In particular, $S\subseteq [\underline{S}, \overline{S}]$. Actually, these operators are derived pointwise from 
$$\begin{array}{l}
S \cup_H T   =  \{t \in S \cup T \text{ $|$ }  t\geq \underline{S}\vee \underline{T}\}
\text{ and } \\
S \cap_H T  =  \{t \in S \cup T \text{ $|$ }  t\leq \overline{S}\wedge \overline{T}\},
\end{array}$$
for any $S$ and $T$ nonempty subsets of $[0,1]$.
$\HFS$s have been successfully applied, for instance, to group decision making, decision support systems, computing with words or cluster analysis, see e.g. \cite{Qian2013, Liao2017, Rodriguez2012, Rodriguez2013, Rodriguez2014}.

From a mathematical point of view, the operators on $\HFS(X)$ preserve Zadeh's lattice operators on $\FS(X)$ and $\sqcup_I$ and  $\sqcap_I$ on $\IVFS(X)$. Nevertheless, $\HFS(X)$ is not a lattice, see \cite[Section 5.C]{Bustince/etal:2016} for a simple example. The reason is that $\cap_H$ is not a meet operator. On the contrary,  $\cup_H$ does provide a join-semilattice, see Theorem \ref{operatorclosed} below. Our aim now is to give a reformulation of  $\cap_H$. 
Let us first define a new partial order on $\SVFS(X)$. We shall manage the original definition of Grattan-Guinness in \cite{Grattan1976}, that is, a SVFS on $X$ is a map $X\to \powerset{[0,1]}\backslash \emptyset$. 

\begin{definition}\label{orderpowerset}
Let $S$ and $T$ be nonempty subsets of $[0,1]$, we say that $S \leq_S T$ if and only if:
\begin{enumerate}[$i)$]
\item $\overline{S}\leq \overline{T}$.
\item $\underline{S}\leq \underline{T}$.
\item $S\cap [\underline{T}, \overline{S}] \subseteq T$.
\end{enumerate}
\end{definition}

 \begin{prop}
 The relation $\leq_S$ is a partial order on $\powerset{[0,1]}\backslash \emptyset$.
 \end{prop}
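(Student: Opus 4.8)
The plan is to verify the three defining axioms of a partial order---reflexivity, antisymmetry and transitivity---for the relation $\leq_S$, exploiting throughout the elementary containment $S\subseteq[\underline{S},\overline{S}]$ recorded just before the statement. Two of the axioms are essentially bookkeeping, so most of the effort should go into the third.

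For reflexivity, conditions $i)$ and $ii)$ reduce to the tautologies $\overline{S}\leq\overline{S}$ and $\underline{S}\leq\underline{S}$, while $iii)$ reads $S\cap[\underline{S},\overline{S}]\subseteq S$, which holds because $S\cap[\underline{S},\overline{S}]=S$ by the containment above. For antisymmetry, I would start from $S\leq_S T$ and $T\leq_S S$: applying conditions $i)$ and $ii)$ in both directions forces $\overline{S}=\overline{T}$ and $\underline{S}=\underline{T}$. Substituting these equalities into condition $iii)$ of $S\leq_S T$ turns $S\cap[\underline{T},\overline{S}]\subseteq T$ into $S\cap[\underline{S},\overline{S}]\subseteq T$, that is $S\subseteq T$ after invoking $S\subseteq[\underline{S},\overline{S}]$; symmetrically $T\leq_S S$ yields $T\subseteq S$, whence $S=T$.

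The main obstacle---and the only step demanding a genuine argument---is transitivity. Assume $S\leq_S T$ and $T\leq_S U$. The chains $\overline{S}\leq\overline{T}\leq\overline{U}$ and $\underline{S}\leq\underline{T}\leq\underline{U}$ give conditions $i)$ and $ii)$ for $S\leq_S U$ at once. For condition $iii)$ I would take an arbitrary $x\in S\cap[\underline{U},\overline{S}]$ and chase it through the two hypotheses with careful control of the endpoints. Since $\underline{T}\leq\underline{U}\leq x\leq\overline{S}$, we have $x\in S\cap[\underline{T},\overline{S}]\subseteq T$, so $x\in T$. Then, from $\underline{U}\leq x\leq\overline{S}\leq\overline{T}$, we obtain $x\in T\cap[\underline{U},\overline{T}]\subseteq U$, hence $x\in U$. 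This establishes $S\cap[\underline{U},\overline{S}]\subseteq U$ and finishes transitivity.

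The crux, I expect, is recognising that the monotonicity of the infima and suprema supplied by $i)$ and $ii)$ is exactly what relocates $x$ into the correct interval at each of the two steps: the lower bound $\underline{U}$ on $x$ combines with $\underline{T}\leq\underline{U}$ and with $\overline{S}\leq\overline{T}$ to place $x$ first in $[\underline{T},\overline{S}]$ and then in $[\underline{U},\overline{T}]$. No density or topological input is needed here; the entire proposition is a consequence of the interplay between the two scalar inequalities and the single inclusion in the definition of $\leq_S$.
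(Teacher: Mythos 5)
Your proposal is correct and follows essentially the same route as the paper: conditions $i)$ and $ii)$ are chained directly, and your element chase through $x\in S\cap[\underline{T},\overline{S}]\subseteq T$ and then $x\in T\cap[\underline{U},\overline{T}]\subseteq U$ is exactly the pointwise form of the paper's inclusion chain $S\cap[\underline{U},\overline{S}]\subseteq S\cap[\underline{T},\overline{S}]\subseteq T$ followed by $S\cap[\underline{U},\overline{S}]\subseteq T\cap[\underline{U},\overline{T}]\subseteq U$. Your antisymmetry argument likewise matches the paper's, so no further comparison is needed.
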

 \begin{proof} 
 The relation is clearly reflexive. Let $S$ and $T$ be nonempty subsets of $[0,1]$ such that $S \leq_S T$ and $T \leq_S S$. Hence, $i)$ and $ii)$ in Definition \ref{orderpowerset} imply that $\underline{S}=\underline{T}$ and $\overline{S}=\overline{T}$. From $iii)$, $S\cap [\underline{T},\overline{S}]=S\cap [\underline{S},\overline{S}]=S\subseteq T$ and $T\cap [\underline{S},\overline{T}]=T\cap [\underline{T},\overline{T}]=T\subseteq S$, so $S=T$ and $\leq_S$ is antisymmetric.
 
For the transitive property, let $S$, $T$ and $U$ be nonempty subsets of $[0,1]$ such that $S \leq_S T$ and $T \leq_S U$. By $i)$ and $ii)$ in Definition \ref{orderpowerset}, $\underline{S}\leq \underline{T} \leq \underline{U}$ and $\overline{S}\leq \overline{T} \leq \overline{U}$. Now, 
$$S\cap [\underline{U},\overline{S}]\subseteq S\cap [\underline{T},\overline{S}]\subseteq T,$$ and hence,
$$S\cap [\underline{U},\overline{S}]\subseteq T\cap [\underline{U},\overline{S}]\subseteq T\cap [\underline{U},\overline{T}] \subseteq U.$$ 
Thus $S \leq_S U$, and the result follows.
 \end{proof}
From the partial order $\leq_S$, the class $\SVFS(X)$ inherits the following pointwise partial order.
\begin{definition}\label{orderSVFS}
For each $A,B\in \SVFS(X)$, we say $A \sqsubseteq_S B \iff A(x)\leq_S B(x) \text{ for any } x\in X.$
\end{definition}
It is easy to check that the partial order $\leq_S$ restricts to $\leq_I$ when working 
on closed intervals in $[0,1]$. Therefore, via the $\Hom(X,-)$ functor, we obtain the chain of poset maps \[\xymatrix{(\FS(X),\leq_F) \ar[r]^-{\Phi} & (\IVFS(X),\sqsubseteq_I) \ar[r]^-{i} & (\SVFS(X),\sqsubseteq_S),}\]
  which embeds $\FS$s into $\SVFS$s as those whose membership degrees are singletons. 
    \begin{definition} Let $S$ and $T$ be two nonempty subsets of $[0,1]$, we define:
$$
\begin{array}{l}
S\cup_S T=\left \{ 
  \begin{array}{ll}
  (S\cap [\underline{T},\overline{S}])\cup T& \text{if $\underline{S}\leq \underline{T}$,}\\
  (T\cap [\underline{S},\overline{T}])\cup S & \text{if $\underline{T}\leq \underline{S}$}\\
  \end{array}
  \right. \, \,\,\text{ and}\\
  \text{ } \\
  S\cap_S T=\left \{ 
  \begin{array}{ll}
  S\cap ([\underline{S},\underline{T}]\cup T) & \text{if $\underline{S}\leq \underline{T}$,}\\
  T\cap ([\underline{T},\underline{S}]\cup S) & \text{if $\underline{T}\leq \underline{S}$.}\\
  \end{array}
  \right.
  \end{array}$$
  \end{definition}

Observe that $\cup_S$ is the same operator as $\cup_H$. This is not the case for $\cap_S$ and $\cap_H$.
  
\begin{exm} 
Let $S=[0.3,0.7]$ and $T=\{0.4,0.5,0.6\}$, hence $S\cap_H T=[0.3,0.6]$ and $S\cap_S T=[0.3,0.4]\cup \{0.5,0.6\}$, see Figure \ref{Fighes}.
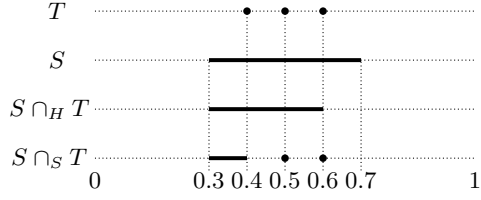
\begin{figure}[h!]
\begin{center}
\begin{tikzpicture}[yscale=0.65]
\draw[densely dotted] (0,0) -- (5,0);
\draw[densely dotted] (0,1) -- (5,1);
\draw[densely dotted] (0,-1) -- (5,-1);
\draw[densely dotted] (0,-2) -- (5,-2);
\draw[black, line width=1.5pt] (1.5,0.7);
\draw[black, line width=1.5pt] (1.5,0) -- (3.5,0);
\draw[black, line width=1.5pt] (1.5,-1) -- (3,-1);
\draw[black, line width=1.5pt] (1.5,-2) -- (2,-2);
\draw (2.5,-2) node [circle,fill=black, inner sep=1pt]{};
\draw (3,-2) node [circle,fill=black, inner sep=1pt]{};
\draw (2,1) node [circle,fill=black, inner sep=1pt]{};
\draw (2.5,1) node [circle,fill=black, inner sep=1pt]{};
\draw (3,1) node [circle,fill=black, inner sep=1pt]{};
\node at (-.5,0) {\scriptsize{$S$}};
\node at (-.5,1) {\scriptsize{$T$}};
\node at (-.6,-1) {\scriptsize{$S\cap_H T$}};
\node at (-.6,-2) {\scriptsize{$S\cap_S T$}};
\draw[densely dotted] (2,-2.25) -- (2,1);
\draw[densely dotted] (2.5,-2.25) -- (2.5,1);
\draw[densely dotted] (3,-2.25) -- (3,1);
\draw[densely dotted] (1.5,-2.25) -- (1.5,0);
\draw[densely dotted] (3.5,-2.25) -- (3.5,0);
\node at (0,-2.45) {\scriptsize{$0$}};
\node at (1.5,-2.45) {\scriptsize{$0.3$}};
\node at (2,-2.45) {\scriptsize{$0.4$}};
\node at (2.5,-2.45) {\scriptsize{$0.5$}};
\node at (3,-2.45) {\scriptsize{$0.6$}};
\node at (3.5,-2.45) {\scriptsize{$0.7$}};
\node at (5,-2.45) {\scriptsize{$1$}};
\end{tikzpicture}
\end{center}
\caption{Operators $\cap_H$ and $\cap_S$.}\label{Fighes}
\end{figure}
\end{exm}

 The operator $\cap_S$ is not closed in the class $\powerset{[0,1]}\backslash \emptyset$ as the following example shows.
  \begin{exm}\label{ex14}
Let us consider the subsets 
$$S=\left \{ \frac{1}{2n} \text{ $|$ } n\geq 1\right \} \text{ and } T=\left \{ \frac{1}{2n+1} \text{ $|$ }  n\geq 1\right \}.$$
Then $S\cap_S T=\emptyset$.
  \end{exm}

The reader might wonder if adding the empty set as a lower bound of all nonempty sets, the class $\powerset{[0,1]}$ becomes a lattice with $\cup_S$ and $\cap_S$. Notwithstanding, we may see that this also fails by an easy reformulation of the former example. Indeed, simply set $S=\{0\}$ and $T$ as in Example \ref{ex14}. Hence, $S\cap_S T=S$, but $S$ is not a lower bound of $T$. The underlying problem is of topological nature: there is a sequence inside $T$ whose limit does not belong to $T$. This suggests that the suitable class should be formed by closed sets under the standard topology on $[0,1]$. Let us denote by $\mathcal{C}([0,1])$ the class of all nonempty closed subsets of $[0,1]$.

\begin{thm}\label{operatorclosed}
The operators $\cup_S$ and $\cap_S$ endow the class $\mathcal{C}([0,1])$ with the lattice structure given by the order $\leq_S$.
\end{thm}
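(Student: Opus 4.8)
The plan is to assume throughout that $\underline{S}\leq\underline{T}$; the case $\underline{T}\leq\underline{S}$ is symmetric, since interchanging $S$ and $T$ swaps the two branches in the definitions of $\cup_S$ and $\cap_S$. When $\underline{S}=\underline{T}=m$ both branches apply, so I first check they agree: both branches of $\cup_S$ reduce to $S\cup T$ (because $S\subseteq[\underline{S},\overline{S}]$), and both branches of $\cap_S$ reduce to $\{m\}\cup(S\cap T)$, using that $m\in S$ and $m\in T$ by closedness. For closure, $S\cup_S T$ and $S\cap_S T$ are built from $S$, $T$ and closed intervals by finite unions and intersections, hence are closed; and $S\cup_S T\supseteq T$ is nonempty. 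Nonemptiness of $S\cap_S T$ is the first essential use of closedness: $\underline{S}$ is attained in $S$ and lies in $[\underline{S},\underline{T}]$, so $\underline{S}\in S\cap_S T$.

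For the join $J=S\cup_S T=(S\cap[\underline{T},\overline{S}])\cup T$, I would record $\underline{J}=\underline{T}=\underline{S}\vee\underline{T}$ and $\overline{J}=\overline{S}\vee\overline{T}$ (if $\overline{S}>\overline{T}$ then $\overline{S}\geq\underline{T}$ forces $\overline{S}\in S\cap[\underline{T},\overline{S}]\subseteq J$). Verifying $S\leq_S J$ and $T\leq_S J$ against Definition~\ref{orderpowerset} is then routine from these extremes, with clause $iii)$ holding because $S\cap[\underline{J},\overline{S}]=S\cap[\underline{T},\overline{S}]$ is literally a summand of $J$ and $T\cap[\underline{J},\overline{T}]=T\subseteq J$. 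For minimality, given $S\leq_S U$ and $T\leq_S U$, clauses $i)$ and $ii)$ for $J\leq_S U$ are immediate; for $iii)$ I would split $J\cap[\underline{U},\overline{J}]$ along the two summands, simplify the interval intersections using $\underline{T}\leq\underline{U}$ and $\overline{S}\leq\overline{J}$ to obtain $S\cap[\underline{U},\overline{S}]$ and $T\cap[\underline{U},\overline{T}]$ respectively, and conclude by clause $iii)$ of the two hypotheses.

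For the meet $M=S\cap_S T=(S\cap[\underline{S},\underline{T}])\cup(S\cap T)$, I would record $\underline{M}=\underline{S}$ but deliberately \emph{not} attempt to compute $\overline{M}$ in closed form: the example $S=\{0,1\}$, $T=\{0.5\}$ gives $M=\{0\}$, so $\overline{M}$ need not equal $\overline{S}\wedge\overline{T}$. Since $M\subseteq S$ and $M\subseteq[\underline{S},\underline{T}]\cup T$ one gets $\overline{M}\leq\overline{S}\wedge\overline{T}$, from which $M\leq_S S$ and $M\leq_S T$ follow, the only delicate point being that $M\cap[\underline{T},\overline{M}]$ can meet the interval summand $[\underline{S},\underline{T}]$ only at $\underline{T}\in T$.

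The substantive step is maximality: given $U\leq_S S$ and $U\leq_S T$, show $U\leq_S M$. Clause $ii)$ is immediate. For clause $i)$ I would prove the stronger $\overline{U}\in M$: if $\overline{U}<\underline{S}$ then $\overline{U}<\underline{M}$ and $U\cap[\underline{S},\overline{U}]=\emptyset$, settling everything; otherwise $\overline{U}\in U$ is attained by closedness with $\overline{U}\geq\underline{S}$, so clause $iii)$ of $U\leq_S S$ puts $\overline{U}\in S$, and then either $\overline{U}\leq\underline{T}$, giving $\overline{U}\in[\underline{S},\underline{T}]$, or $\overline{U}>\underline{T}$, in which case clause $iii)$ of $U\leq_S T$ puts $\overline{U}\in T$; either way $\overline{U}\in S\cap([\underline{S},\underline{T}]\cup T)=M$. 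Clause $iii)$ for $U\leq_S M$ is the same argument applied to an arbitrary $u\in U\cap[\underline{S},\overline{U}]$: such $u$ lies in $S$ by $U\leq_S S$, and in $[\underline{S},\underline{T}]$ or in $T$ according to whether $u\leq\underline{T}$ or $u>\underline{T}$, hence $u\in M$. I expect this maximality of the meet to be the main obstacle: trying to describe $\overline{M}$ explicitly is a dead end, and the workable route is to verify the three conditions of $\leq_S$ directly, exploiting that the extremes of closed sets are attained so that the critical element $\overline{U}$ (and each candidate $u$) can be located inside $M$ via the third clause of $\leq_S$ applied to $S$ and to $T$ separately.
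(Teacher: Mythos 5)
Your proposal is correct and follows essentially the same route as the paper's proof: closure and nonemptiness via finite unions/intersections of closed sets and attainment of $\underline{S}$, the extrema computations $\underline{S\cup_S T}=\underline{S}\vee\underline{T}$ and $\overline{S\cup_S T}=\overline{S}\vee\overline{T}$ (with only upper estimates for the meet), and the least/greatest bound verifications by locating elements of $U\cap[\underline{S},\overline{U}]$ inside the meet through clause $iii)$ of $U\leq_S S$ and $U\leq_S T$ separately, exactly as in the paper's three-case analysis. The only divergences are cosmetic and sound: you add a well-definedness check of the two branches when $\underline{S}=\underline{T}$ (which the paper omits), and you prove $\overline{U}\leq\overline{M}$ by showing $\overline{U}\in M$ via closedness of $U$, whereas the paper bounds each $j\in J$ by $\overline{I}$ pointwise and takes the supremum, avoiding that extra use of closedness.
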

\begin{proof}
Let us fix $S$ and $T$ in $\mathcal{C}([0,1])$. Firstly, the operators are closed for $\mathcal{C}([0,1])$. By definition, $S\cup_S T$ and $S\cap_S T$ are constructed as a finite intersection and   union of closed sets, so they are also closed. Additionally, $S\cup_S T\not =\emptyset$, since $S$ or $T$ are contained in $S\cup_S T$, and $S\cap_S T\not =\emptyset$, since $\underline{S}$ or $\underline{T}$ are in $S\cap_S T$.

For brevity, denote $I=S\cup_S T$. We suppose that $\underline{S}\leq \underline{T}$, since the other case can be proved symmetrically. Therefore $I=(S\cap [\underline{T}, \overline{S}])\cup T$. 

The minimum $\underline{I}=\underline{T}$ since $\underline{T}$ is lower or equal than the minimum of $S\cap [\underline{T},\overline{S}]$, and $\underline{T}\in T$. The maximum $\overline{I}$ is the greatest among the maxima of the sets $S\cap [\underline{T},\overline{S}]$ and $T$, which are $\overline{S}$ and $\overline{T}$, respectively. So $\overline{I}=\overline{S}\vee \overline{T}$. Therefore $\underline{I}\geq \underline{S},\underline{T}$ and $\overline{I}\geq \overline{S},\overline{T}$. Now,
$$S\cap [\underline{I},\overline{S}]=S\cap [\underline{T},\overline{S}]\subseteq (S\cap [\underline{T},\overline{S}])\cup T=I$$
and 
$$T\cap [\underline{I},\overline{T}]=T\cap [\underline{T},\overline{T}]=T\subseteq (S\cap [\underline{T},\overline{S}])\cup T=I,$$
thus $S\leq_S I$ and $T\leq_S I$, so $I$ is an upper bound of both.
 
Let $J\in \mathcal{C}([0,1])$ such that $S\leq_S J$ and $T\leq_S J$. Then
$\underline{J}\geq \underline{T}= \underline{I}$ and $\overline{J}\geq \overline{S}\vee \overline{T}=\overline{I}$.
Therefore,
$$\begin{array}{rl}
I\cap [\underline{J},\overline{I}] & \subseteq I\cap [\underline{T},\overline{J}]\\
& = ((S\cap [\underline{T},\overline{S}])\cup T)\cap [\underline{T},\overline{J}]\\
&= (S\cap [\underline{T},\overline{S}] \cap [\underline{T},\overline{J}])\cup (T\cap [\underline{T},\overline{J}])\\
&\subseteq (S\cap [\underline{T},\overline{S}] \cap [\underline{S},\overline{J}])\cup J\\
& \subseteq ([\underline{T},\overline{S}]\cap J)\cup J\\
& \subseteq J.
\end{array}$$
Thus $I\leq_S J$, i.e. $I$ is the lowest upper bound.

Let us now denote $I=S\cap_S T$. Again, we suppose that $\underline{S}\leq \underline{T}$, so $I=S\cap ([\underline{S},\underline{T}]\cup T)$. The minimum of $I$ is $\underline{S}$, since $\underline{S}$ is the minimum of $[\underline{S},\underline{T}]\cup T$ and $\underline{S}\in S$.
Since $I\subseteq S$ and $I\subseteq [\underline{S},\underline{T}]\cup T$, hence $\overline{I}\leq \overline{S}$ and $\overline{I}\leq \overline{T}$. Now,
$$I\cap [\underline{S},\overline{I}]=I\cap [\underline{I},\overline{I}]=I=S\cap ([\underline{S},\underline{T}]\cup T)\subseteq S$$
and 
\[
\begin{split}
I\cap [\underline{T},\overline{I}] & =S\cap ([\underline{S},\underline{T}]\cup T)\cap [\underline{T},\overline{I}]\\
& =(S\cap [\underline{S},\underline{T}]\cap [\underline{T},\overline{I}])\cup (S\cap T\cap [\underline{T},\overline{I}])\\
& \subseteq \{\underline{T}\}\cup T\\
&=T,
\end{split}
\]
so $I$ is a lower bound of $S$ and $T$.

Let $J\in \mathcal{C}([0,1])$ such that $J\leq_S S$ and $J\leq_S T$. Therefore, $\underline{J}\leq \underline{S}=\underline{I}$. Before proving the condition on the maxima, let us show that  $J\cap [\underline{I},\overline{J}]\subseteq I$, with $\underline{I}=\underline{S}$. Since $J\leq_S S$, $J\cap [\underline{S},\overline{J}]\subseteq S$. We claim that $J\cap [\underline{S},\overline{J}]\subseteq [\underline{S},\underline{T}]\cup T$, and consequently, $J\cap [\underline{S},\overline{T}]\subseteq S\cap ([\underline{S},\underline{T}]\cup T)=I$. We distinguish three cases:
\begin{itemize}
\item If $\overline{J}< \underline{S}$, then $J\cap[\underline{S},\overline{J}]=\emptyset \subseteq  [\underline{S},\underline{T}]\cup T$.
\item If $\underline{S}\leq \overline{J} \leq \underline{T}$, then $J\cap[\underline{S},\overline{J}]\subseteq [\underline{S},\underline{T}]\subseteq [\underline{S},\underline{T}]\cup T$.
\item If $\underline{T}\leq \overline{J}$, then $J\cap[\underline{S},\overline{J}]=(J\cap [\underline{S},\underline{T}])\cup (J\cap [\underline{T},\overline{J}])\subseteq [\underline{S},\underline{T}]\cup T$, since $J\leq_S T$.
\end{itemize}

Finally, let $j\in J$. If $j\leq \underline{I}$ then $j\leq \overline{I}$. Otherwise, $j\in J\cap [\underline{I},\overline{J}]\subseteq I$, so $j\leq \overline{I}$. Thus $\overline{J}\leq \overline{I}$. This proves that $I$ is the greatest lower bound, and finishes the proof.
\end{proof}

\begin{definition}
A Closed-Valued Fuzzy Set (CVFS) $A$ on $X$ is a mapping $A:X\to \mathcal{C}([0,1])$. Then the set of all closed-valued fuzzy sets over $X$ is $\CVFS(X)=\Hom(X,\mathcal{C}([0,1])).$
\end{definition}

By (\ref{func}), $\CVFS(X)$ becomes a partially ordered set with the pointwise partial order inherited from $\leq_S$. Also, by (\ref{func2}), it is a lattice by means of the operators derived from $\cup_S$ and $\cap_S$. For consistency with the notation  established in the former sections, let us denote by $\leq_C$ the partial order $\leq_S$ when restricting to the class of closed sets. Then we mean by $\sqsubseteq_C$ the restriction of $\sqsubseteq_S$ to the class $\CVFS(X)$. Analogously, we shall use the symbols $\cup_C$ and $\cap_C$ instead of $\cup_S$ and $\cap_S$, respectively. Then, for each $A,B\in \CVFS(X)$, we define the operators $\sqcup_{C}$ and $\sqcap_{C}$ as:
\[
(A\sqcup_{C} B)(x)=A(x)\cup_C B(x) \text{ and}
(A\sqcap_{C} B)(x)=A(x)\cap_C B(x).
\]
for any $x\in X$.
\begin{cor}
The operators $\sqcap_C$ and $\sqcup_C$ endow the set $\CVFS(X)$ with the lattice structure associated to the partial order $\sqsubseteq_C$.
\end{cor}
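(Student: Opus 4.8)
The plan is to appeal directly to the categorical machinery set up in Section \ref{fuzzy}, so that the corollary becomes a formal consequence of Theorem \ref{operatorclosed}. By that theorem, the triple $(\mathcal{C}([0,1]), \cup_C, \cap_C)$ is a lattice whose associated order is $\leq_C$. Hence $\mathcal{C}([0,1])$ is an object of the category $\mathcal{L}at$, and we may evaluate the endofunctor (\ref{func2}), namely $\Hom(X,-):\mathcal{L}at \to \mathcal{L}at$, at this lattice with the nonempty set $X$ as source.

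By the very definition of this functor, $\Hom(X, \mathcal{C}([0,1])) = \CVFS(X)$ carries a lattice structure whose join and meet are computed pointwise from those of $\mathcal{C}([0,1])$; that is, $(A \sqcup B)(x) = A(x) \cup_C B(x)$ and $(A \sqcap B)(x) = A(x) \cap_C B(x)$ for all $x \in X$. These are exactly the operators $\sqcup_C$ and $\sqcap_C$ introduced above, so $\CVFS(X)$ is a lattice under them.

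The only point requiring a word of justification is that the partial order determined by $\sqcup_C$ and $\sqcap_C$ coincides with $\sqsubseteq_C$, rather than merely being some a priori different order. But this is precisely what the functor (\ref{func2}) guarantees: the order it places on $\Hom(X,L)$ is the pointwise lifting of the order on $L$, and this lifted order is the one induced by the pointwise operations. Since $\sqsubseteq_C$ was defined as the restriction of $\sqsubseteq_S$ (Definition \ref{orderSVFS}), which is itself the pointwise lifting of $\leq_S$, and since $\leq_C$ is just $\leq_S$ restricted to closed sets, the two orders agree on $\CVFS(X)$. I expect no genuine obstacle here: all the substantive work, namely that $\mathcal{C}([0,1])$ is closed under $\cup_S$ and $\cap_S$ and forms a lattice for them, has already been carried out in Theorem \ref{operatorclosed}, and the passage to $\CVFS(X)$ is purely formal functoriality.
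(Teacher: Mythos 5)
Your proposal is correct and takes essentially the same route as the paper: the paper's proof consists of the single line that the corollary ``follows directly from Theorem \ref{operatorclosed}'', relying on the pointwise lifting via the functor $\Hom(X,-)$ of (\ref{func2}) already stated in the text immediately preceding the corollary. You have merely spelled out explicitly (and correctly) the functorial bookkeeping, including the agreement of the lifted order with $\sqsubseteq_C$, which the paper leaves implicit.
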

\begin{proof}
It follows directly from Theorem \ref{operatorclosed}.
\end{proof}
The operators $\sqcup_C$ and $\sqcap_C$ equals $\sqcup_I$ and $\sqcap_I$, respectively, when restricting to $\IVFS$s.
Clearly, this is a consequence of the same property for the corresponding operators on nonempty closed sets and closed intervals in $[0,1]$. Indeed, let $[a,b]$ and $[c,d]$ be closed intervals with, for instance, $a\leq c$. Then 
\[
\begin{split}
[a,b]\cup_C [c,d] &=([a,b]\cap [c,b])\cup [c,d]\\
 & =[c,b]\cup [c,d]\\
 & =[c,b\vee d]\\
 & =[a,b]\cup_I [c,d].
 \end{split}
 \]
  Similarly, one may prove that $[a,b]\cap_C [c,d]=[a,b]\cap_I [c,d]$. Therefore, collecting the results of this section, we have proved the following
 
\begin{thm}
The sequence of maps
$$\xymatrix{(\FS(X),\cup_F,\cap_F) \ar[d]^-{\Phi}  & & \\   (\IVFS(X),\sqcup_I,\sqcap_I) \ar[r]^-{i}  &(\CVFS(X),\sqcup_C,\sqcap_C)}$$
is a chain of lattice embeddings, where $i$ is lifted from the identification of a closed interval as a nonempty closed set.
\end{thm}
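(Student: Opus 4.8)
The plan is to recognize this statement as the assembly of results already in hand, so that the proof reduces to checking that $i$ is a lattice embedding and then invoking that a composite of lattice embeddings is again one. The map $\Phi$ has already been shown to be a lattice embedding in Section \ref{IVFS}, so the only genuinely new ingredient concerns the behaviour of $i$, and once $i$ is handled the chain follows formally.

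First I would push the problem down to the level of membership degrees. Since $i$ is obtained by applying the functor (\ref{func2}) to the inclusion $j:\mathcal{I}([0,1])\to \mathcal{C}([0,1])$ that regards a closed interval $[a,b]$ as a nonempty closed subset of $[0,1]$, and since (\ref{func2}) sends lattice embeddings to lattice embeddings (it preserves monomorphisms, see \cite{Pierce1991}), it suffices to prove that $j$ itself is a lattice embedding. Injectivity of $j$ is immediate, as $j$ leaves the underlying set unchanged. For the lattice-homomorphism property I would appeal directly to the computation carried out just before the statement: for closed intervals $[a,b]$ and $[c,d]$ one has $[a,b]\cup_C[c,d]=[a,b]\cup_I[c,d]$ and $[a,b]\cap_C[c,d]=[a,b]\cap_I[c,d]$, whence $j$ preserves both join and meet. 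Lifting through (\ref{func2}) then gives that $i=\Hom(X,j)$ is a lattice embedding from $(\IVFS(X),\sqcup_I,\sqcap_I)$ to $(\CVFS(X),\sqcup_C,\sqcap_C)$; explicitly $i(A)(x)$ is the interval $A(x)$ viewed as a closed set, and the identities $i(A\sqcup_I B)=i(A)\sqcup_C i(B)$ and $i(A\sqcap_I B)=i(A)\sqcap_C i(B)$ hold pointwise. Finally, since both $\Phi$ and $i$ are injective lattice homomorphisms and the class of lattice embeddings is closed under composition, the composite $i\circ \Phi$ is again a lattice embedding, so the whole sequence forms a chain of lattice embeddings.

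I expect no serious obstacle here: the substantive work lies in Theorem \ref{operatorclosed}, which furnishes the lattice structure on $\mathcal{C}([0,1])$, together with the preceding verification that $\cup_C,\cap_C$ agree with $\cup_I,\cap_I$ on intervals. The only points demanding care are purely formal, namely confirming that the order $\leq_C$ restricts to $\leq_I$ on $\mathcal{I}([0,1])$ (already noted in the text), that $j$ preserves both operators and not merely one of them, and that the functor (\ref{func2}) transports the embedding property faithfully to the level of fuzzy sets.
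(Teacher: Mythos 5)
Your proposal is correct and follows essentially the same route as the paper: the paper likewise reduces the theorem to the observation that $\cup_C$ and $\cap_C$ agree with $\cup_I$ and $\cap_I$ on closed intervals (verified by the explicit computation with $[a,b]$ and $[c,d]$ immediately preceding the statement), so that the inclusion $\mathcal{I}([0,1])\hookrightarrow \mathcal{C}([0,1])$ is a lattice embedding, which is then lifted through the functor $\Hom(X,-)$ of (\ref{func2}) and composed with the embedding $\Phi$ from Section \ref{IVFS}. Your only additional care points (injectivity of the inclusion, restriction of $\leq_C$ to $\leq_I$, preservation of both operators) are exactly the ones the paper records, so there is nothing to add.
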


\begin{rmk}\label{remCVFS}
We may also substitute the map $\Phi$ by any of the lattice embeddings defined in Section \ref{IVFS}. Their compositions with $i$ yield a family of lattice extensions of Zadeh's union and intersection.
\end{rmk}
Although this solution to \cite[Remark 2]{Bustince/etal:2016} is not complete, it covers most of the practical applications of HFSs. As commented in \cite{Rodriguez2014}, these only make use of nonempty finite sets, i.e. the so-called Typical Hesitant Fuzzy Sets (THFSs) \cite{Bedregal2014, Bedregal2014b}. Clearly, THFSs are CVFSs. Additionally, the approach by CVFSs  allows to work with different philosophies of understanding the membership degrees. For instance, we may include in the same framework THFSs and IVFSs. One could handle a $\CVFS$ whose membership degrees are given by closed intervals for some elements, whilst, for others, they are given by finite sets.

An interesting problem is to find a negation on $\CVFS$s that extends any of the ones considered in \cite{Bedregal2010} for $\IVFS$s. A different approach to this problem may be inspired from \cite{Walker05}. There, a negation is proposed for type-2 fuzzy sets, although the lattice structure is lost.

\section{Type-2 Fuzzy Sets}\label{T2FS}

We finish the paper analyzing how to place the former notions into the class of Type-2 Fuzzy Sets (T2FSs). This type of fuzzy sets is defined by Zadeh in \cite{Zadeh1975}. Actually, there it is defined something more general, the class of type-$n$ fuzzy sets. Nevertheless, here we focus on $n=2$, since, as far as our knowledge, no practical application has been developed for $n>2$. A T2FS is a  fuzzy set in which the membership degrees are given by FSs on $[0,1]$. Formally speaking,

\begin{definition}
A type-2 fuzzy set $A$ on \(X\) is a mapping $A : X \to \cerouno^\cerouno$, that is, the class of all $\T2FS$s on $X$ is
\[
\T2FS(X) = \Hom\big(X,\cerouno^\cerouno\big)\cong \Hom(X\times [0,1],[0,1]),
\] 
by the natural isomorphism (\ref{adjunction}). Observe that, looking at the right part of the isomorphism, a $\T2FS$ can be interpreted as a time-varying membership degree assignation, i.e. for any element $x\in X$ and time $t\in [0,1]$, we assign a membership degree $A(x,t)\in [0,1]$.
\end{definition}

\subsection{Set embeddings}

A way of viewing $\SVFS$s inside $\T2FS$s may be gotten making use of the obvious lattice inclusion $\dos \to [0,1]$, which maps $0\mapsto 0$ and $1\mapsto 1$. This can be lifted to $$\mu:\Hom([0,1],\dos)\to \Hom([0,1],[0,1]),$$ by means of the functor $\Hom([0,1],-)$, which maps a set $S\subseteq [0,1]$ to its characteristic function. 
 Hence, applying the functor $\Hom(X,-)$, we get $\overline{\mu}:\SVFS(X)\to \T2FS(X)$, where defined by 
\[\overline{\mu}(A)(x)(t)=\left \{\begin{array}{ll} 1 & \text{if $t\in A(x)$,}\\ 0 & \text{otherwise.} \end{array} \right.\]
for each $A\in \SVFS(X)$ and any $x\in X$.

Now, any injection from $\FS(X)$ into $\SVFS(X)$ provides, by composition with $\overline{\mu}$,  an embedding from $\FS(X)$ into $\T2FS(X)$. For instance, the composition $$\xymatrix{\FS(X) \ar[r]^-{\Phi} \ar@/_10pt/[rr]_-{\overline{\Phi}} & \SVFS(X) \ar[r]^-{\overline{\mu}} & \T2FS(X),}$$ where $\Phi$ is the map defined in Section \ref{IVFS}, yields the embedding that sees $\FS$s as $\T2FS$s whose membership degrees are singleton maps. That is, 
\[\overline{\Phi}(A)(x)(t)=\left \{\begin{array}{ll} 1 & \text{if $t= A(x)$,}\\ 0 & \text{otherwise,} \end{array} \right.\]
for each $A\in \FS(X)$ and any $x\in X$, see Figure \ref{Fig4}(a).

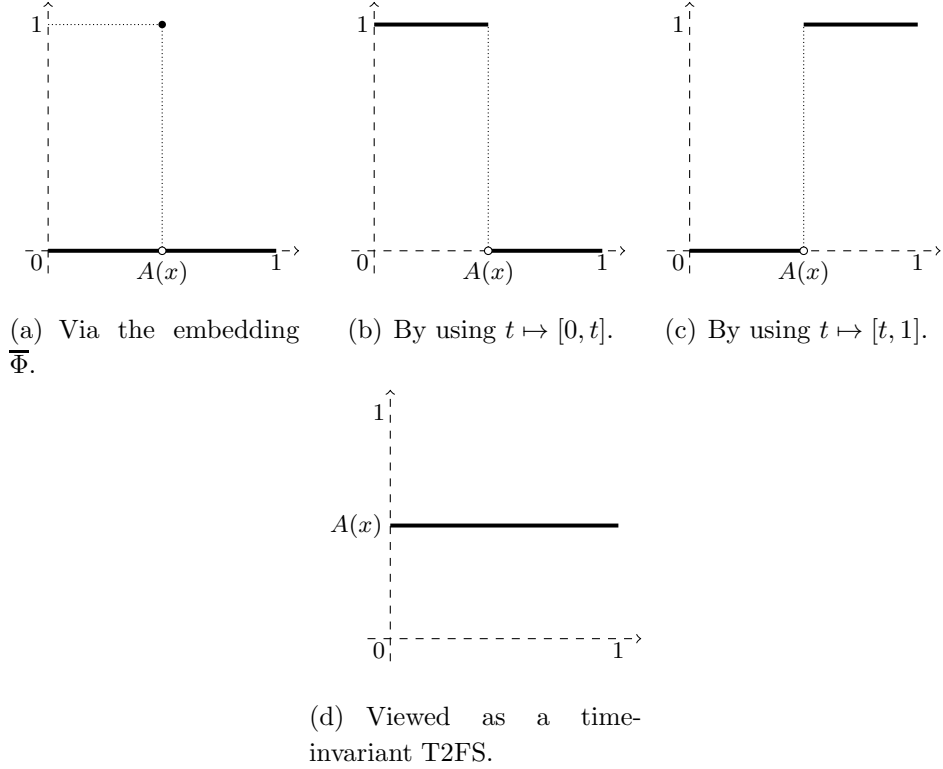
\begin{figure*}[tb]
\begin{center}
\subfigure[Via the embedding $\overline{\Phi}$.]{
\begin{tikzpicture}[scale=3]
\draw[dashed, ->] (0,-0.1) -- (0,1.1); 
\draw[dashed, ->] (-.1,0) -- (1.1,0); 
\draw[densely dotted] (0,1) -- (0.5,1);
\draw[densely dotted] (0.5,0) -- (0.5,1);
\node at (-.05,1) {\scriptsize{$1$}};
\node at (0.5,-0.1) {\scriptsize{$A(x)$}};
\node at (1,-0.05) {\scriptsize{$1$}};
\node at (-.05,-0.05) {\scriptsize{$0$}};
\draw[black, line width=1.5pt] (0,0) -- (1,0);
\draw (0.5,1) node [circle,fill=black, inner sep=1pt]{};
\filldraw (0.5,0) node [circle,fill=white, draw=black,  inner sep=1pt]{};
\end{tikzpicture}} \hspace{0.2cm}
\subfigure[By using $t\mapsto \lbrack 0, t\rbrack$.]{
\begin{tikzpicture}[scale=3]
\draw[densely dotted] (0.5,0) -- (0.5,1);
\draw[dashed, ->] (0,-0.1) -- (0,1.1); 
\draw[dashed, ->] (-.1,0) -- (1.1,0); 
\node at (-.05,1) {\scriptsize{$1$}};
\node at (0.5,-0.1) {\scriptsize{$A(x)$}};
\node at (1,-0.05) {\scriptsize{$1$}};
\node at (-.05,-0.05) {\scriptsize{$0$}};
\draw[black, line width=1.5pt] (0.5,0) -- (1,0);
\draw[black, line width=1.5pt] (0,1) -- (0.5,1);
\filldraw (0.5,0) node [circle,fill=white, draw=black,  inner sep=1pt]{};
\end{tikzpicture}}\hspace{0.2cm}
\subfigure[By using $t\mapsto \lbrack t, 1\rbrack$.]{
\begin{tikzpicture}[scale=3]
\draw[densely dotted] (0.5,0) -- (0.5,1);
\draw[dashed, ->] (0,-0.1) -- (0,1.1); 
\draw[dashed, ->] (-.1,0) -- (1.1,0); 
\node at (-.05,1) {\scriptsize{$1$}};
\node at (0.5,-0.1) {\scriptsize{$A(x)$}};
\node at (1,-0.05) {\scriptsize{$1$}};
\node at (-.05,-0.05) {\scriptsize{$0$}};
\draw[black, line width=1.5pt] (0.5,1) -- (1,1);
\draw[black, line width=1.5pt] (0,0) -- (0.5,0);
\filldraw (0.5,0) node [circle,fill=white, draw=black,  inner sep=1pt]{};
\end{tikzpicture}}\hspace{0.2cm}
\subfigure[Viewed as a time-invariant $\T2FS$.]{
\begin{tikzpicture}[scale=3]
\draw[dashed, ->] (0,-0.1) -- (0,1.1); 
\draw[dashed, ->] (-.1,0) -- (1.1,0); 
\node at (-.05,1) {\scriptsize{$1$}};
\node at (-0.15,0.5) {\scriptsize{$A(x)$}};
\node at (1,-0.05) {\scriptsize{$1$}};
\node at (-.05,-0.05) {\scriptsize{$0$}};
\draw[black, line width=1.5pt] (0,0.5) -- (1,0.5);
\node at (0.5,-0.1) {\scriptsize{$\phantom{A(x)}$}};
\end{tikzpicture}}
\end{center}
\caption{Membership degrees of a fuzzy set $A$ as a type-2 fuzzy set.}\label{Fig4}
\end{figure*}

We may also see $\FS(X)$ inside $\T2FS(X)$ through any lattice injection $\FS(X)\to \IVFS(X)$. For instance, consider $\overline{\Lambda}:\FS(X) \to \T2FS(X)$ defined by
\[\overline{\Lambda}(A)(x)(t)=\left \{\begin{array}{ll} 1 & \text{if $t\leq A(x)$,}\\ 0 & \text{otherwise,} \end{array} \right.\]
for any $A\in \FS(X)$ and $x\in X$. Observe that here we have used the map $\Lambda:\FS(X)\to \IVFS(X)$ of Section \ref{IVFS}, which is derived from the map $[0,1]\to \mathcal{I}([0,1])$ given by $t\mapsto [0,t]$. So that we have the commutative diagram
\begin{equation}\label{eq1}
\xymatrix{ \FS(X) \ar[rr]^-{\overline{\Lambda}} \ar[d]^-{\Lambda} \ar[rd]^-{\Theta}&   & \T2FS(X) \\
                      \IVFS(X) \ar[r]^-{i}&  \SVFS(X). \ar[ru]^-{\overline{\mu}} & }
\end{equation}
Albeit any of those described in Section \ref{IVFS} could be taken, see Figure \ref{Fig4}(b) and (c). Observe that we may also substitute $i:\IVFS(X)\to \SVFS(X)$ by the map $\Xi$ defined in Section \ref{SVFSb}.

A different methodology can be followed by taking into account the philosophy of seeing a T2FS as a time-varying FS. The construction is dual to the ones developed in this paper. Fix a set map $f:A\to B$ and apply the contravariant functor $\Hom(-,[0,1])$. Then, any map $h\in \Hom(B,[0,1])$ yields  a map $h\circ f\in \Hom(A,[0,1])$.
In this particular case, consider the projection to the first coordinate $p_1:X\times [0,1] \to X$ defined by $(x,t)\mapsto x$ for any $x\in X$ and $t\in [0,1]$. Therefore, there is a map $\Gamma : \FS(X) \to \T2FS(X)$
defined as
\begin{equation}\label{Gammamap}
\Gamma(A)(x)(t)=A(x)
\end{equation}
for each $A\in \FS(X)$ and any $x\in X$ and $t\in [0,1]$, see Figure \ref{Fig4}(d).

\subsection{Lattice embeddings}

Consider now the lattice structures. As for $\SVFS$s, some authors have treated the problem of endowing the class $\T2FS(X)$ with a lattice structure whose restriction to $\FS(X)$, via $\overline{\Phi}$, becomes Zadeh's union and intersection. In  \cite{Dubois1979} and \cite{Mizumoto1976}  it is proposed the lattice structure given by
$$
(A\sqcup_{T2}B)(x)=A(x)\cup_F B(x) \text{ and }
(A\sqcap_{T2}B)(x)=A(x)\cap_F B(x),
$$
for each $A,B\in \T2FS(X)$ and $x\in X$. Nevertheless, although $(\T2FS(X),\sqcup_{T2},\sqcap_{T2})$ is a complete lattice, as proved in \cite[Proposition 4.1]{Bustince/etal:2016}, Zadeh's operators on FSs are not recovered, see \cite{Dubois1979}, or \cite[Section IV.B]{Bustince/etal:2016} for a simple counterexample. 
Since it can be proved that $$\overline{\mu}:(\SVFS(X),\sqcup,\sqcap)\to (\T2FS(X),\sqcup_{T2},\sqcap_{T2})$$ is a lattice embedding, the underlying problem is that $\Phi$ is not a lattice map. Hence, replacing $\Phi$ by any other lattice embedding from $(\FS(X),\cup_F,\cap_F)$ to $(\SVFS(X),\sqcup,\sqcap)$, we may produce a family of lattice embeddings which preserve Zadeh's operators.  We have then proved  the following result.

\begin{thm}
The composition of maps
$$\xymatrix{(\FS(X),\cup_F,\cap_F) \ar[d]^-{\square} \ar@{.>}[r]& (\T2FS(x),\sqcup_{T2},\sqcap_{T2}) \\ (\IVFS(X),\sqcup_I,\sqcap_I) \ar[r]^-{\Xi} & (\SVFS(X),\sqcup,\sqcap)\ar[u]^-{\overline{\mu}} ,}$$
where $\square$ is any of the embeddings in Section \ref{IVFS}, provides a family of lattice embeddings from $\FS(X)$ to $\T2FS(X)$.
\end{thm}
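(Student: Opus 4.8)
The plan is to exploit that the dotted composite in the diagram factors as $\overline{\mu}\circ\Xi\circ\square$, where two of the three factors have already been established as lattice embeddings: $\square$ is any of the maps $\Phi,\Omega,\Lambda$ from Section~\ref{IVFS}, and $\Xi$ is the embedding of Section~\ref{SVFSb}. Since injectivity and the homomorphism property are both stable under composition, the whole statement reduces to the single claim made in the running text just above, namely that
\[
\overline{\mu}:(\SVFS(X),\sqcup,\sqcap)\to (\T2FS(X),\sqcup_{T2},\sqcap_{T2})
\]
is a lattice embedding. So the first step I would take is to isolate and prove this claim; the rest follows formally from the closure of lattice embeddings under composition.

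To prove that $\overline{\mu}$ is a lattice embedding I would follow the categorical strategy used throughout the paper. The inclusion $\dos\hookrightarrow\cerouno$ sending $0\mapsto 0$ and $1\mapsto 1$ is an injective lattice homomorphism, because on the two-element chain $\{0,1\}$ the operations $\vee$ and $\wedge$ inherited from $\cerouno$ agree with the Boolean join and meet. Applying the endofunctor (\ref{func2}) in the guise $\Hom(\cerouno,-)$ produces an injective lattice homomorphism $\mu:\Hom(\cerouno,\dos)\to\Hom(\cerouno,\cerouno)$, and a further application of $\Hom(X,-)$ yields $\overline{\mu}$. Since (\ref{func2}) preserves monomorphisms, $\overline{\mu}$ is a lattice embedding. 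Alternatively, and perhaps more transparently, I would give a pointwise verification: injectivity is immediate since $\overline{\mu}(A)=\overline{\mu}(B)$ forces the characteristic functions of $A(x)$ and $B(x)$ to coincide for all $x$; for the join one checks, for fixed $x\in X$ and $t\in\cerouno$, that $\overline{\mu}(A\sqcup B)(x)(t)=1$ iff $t\in A(x)\cup B(x)$ iff $\overline{\mu}(A)(x)(t)\vee\overline{\mu}(B)(x)(t)=1$, and the meet is handled identically with $\cup,\vee$ replaced by $\cap,\wedge$.

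The crux of the whole argument, and the only place where the particular lattice structures genuinely intervene, is precisely this compatibility: the set-theoretic union and intersection on $\powerset{\cerouno}$, which give $\SVFS(X)$ its structure through $\dos$ via (\ref{adjunction}), must be carried by $\overline{\mu}$ into the \emph{pointwise maximum and minimum} on $\cerouno^{\cerouno}$ defining $\sqcup_{T2}$ and $\sqcap_{T2}$. This succeeds only because $\overline{\mu}$ lands in $\{0,1\}$-valued maps, on which $\max$ and $\min$ collapse to Boolean $\vee$ and $\wedge$; it is exactly the failure of this collapse for general $\cerouno$-valued maps that makes $\Phi$ itself a non-lattice map, as noted earlier in the section. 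I therefore expect the main obstacle to be conceptual rather than computational: one must be careful to distinguish the $\dos$-derived structure on $\SVFS(X)$ from the $\cerouno$-derived structure on $\T2FS(X)$ and to pinpoint that $\overline{\mu}$ bridges them only on characteristic functions. Once $\overline{\mu}$ is known to be a lattice embedding, the composite $\overline{\mu}\circ\Xi\circ\square$ is a lattice embedding for every admissible choice of $\square$, which yields the announced family.
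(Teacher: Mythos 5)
Your proposal is correct and takes essentially the same route as the paper: the theorem follows by composing the already-established lattice embeddings $\square$ and $\Xi$ with $\overline{\mu}$, the only new ingredient being that $\overline{\mu}:(\SVFS(X),\sqcup,\sqcap)\to(\T2FS(X),\sqcup_{T2},\sqcap_{T2})$ is a lattice embedding, which the paper merely asserts (``it can be proved that''). Your verification of that claim --- categorically, by applying $\Hom(X,-)$ and $\Hom(\cerouno,-)$ to the lattice inclusion $\dos\hookrightarrow\cerouno$, and pointwise on characteristic functions --- correctly supplies the detail the paper leaves implicit.
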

Additionally, we may prove the following theorem.
\begin{thm}
The map 
$\Gamma : (\FS(X),\cup_F,\cap_F) \to (\T2FS(X),\sqcup_{T2},\sqcap_{T2})$, 
defined in \eqref{Gammamap}, is a lattice embedding. 
\end{thm}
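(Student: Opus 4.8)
The plan is to verify directly the three conditions defining a lattice embedding: injectivity, preservation of join, and preservation of meet. The guiding observation is that $\Gamma(A)(x)$ is the \emph{constant} (time-invariant) membership degree $t\mapsto A(x)$ on $\cerouno$, so every computation reduces, through the pointwise nature of $\cup_F$ and $\cap_F$ on $\cerouno^\cerouno$, to the corresponding operation on the scalar values $A(x)\in\cerouno$.

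First I would establish injectivity. If $\Gamma(A)=\Gamma(B)$, then evaluating at an arbitrary $x\in X$ and any fixed $t\in\cerouno$ gives $A(x)=\Gamma(A)(x)(t)=\Gamma(B)(x)(t)=B(x)$, hence $A=B$. Next I would check preservation of the join. On one side, for all $x\in X$ and $t\in\cerouno$,
\[
\Gamma(A\cup_F B)(x)(t)=(A\cup_F B)(x)=A(x)\vee B(x).
\]
On the other side, unwinding the definition of $\sqcup_{T2}$ together with the fact that $\cup_F$ on $\cerouno^\cerouno$ is taken pointwise,
\[
(\Gamma(A)\sqcup_{T2}\Gamma(B))(x)(t)=\bigl(\Gamma(A)(x)\cup_F\Gamma(B)(x)\bigr)(t)=\Gamma(A)(x)(t)\vee\Gamma(B)(x)(t)=A(x)\vee B(x).
\]
Since these agree for all $x$ and $t$, we obtain $\Gamma(A\cup_F B)=\Gamma(A)\sqcup_{T2}\Gamma(B)$. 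The argument for $\sqcap_{T2}$ is identical, with $\vee$ replaced by $\wedge$.

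There is essentially no obstacle here: because $\Gamma(A)(x)$ is constant in the $t$-variable, and both the $\T2FS$ operators and Zadeh's operators on $\cerouno$ act pointwise, the two sides collapse to the same scalar at every point $(x,t)$. The only feature worth recording explicitly is that the pointwise maximum and minimum of two constant functions are again constant, so no genuinely $t$-varying membership profiles ever arise in the image and the whole verification stays at the scalar level.
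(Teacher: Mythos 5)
Your proof is correct and follows essentially the same route as the paper's: both reduce the lattice-map property to the same pointwise computation $(\Gamma(A)\sqcup_{T2}\Gamma(B))(x)(t)=A(x)\vee B(x)=\Gamma(A\cup_F B)(x)(t)$, and dually for the meet. The only cosmetic difference is injectivity, which you verify by direct evaluation at a point $(x,t)$ while the paper invokes the categorical fact that the $\Hom$ construction takes the surjection $p_1:X\times\cerouno\to X$ to a monomorphism; the two arguments are interchangeable here.
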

\begin{proof}
Since $\Hom$ functors preserve monomorphisms, by construction, $\Gamma$ is one-to-one. Now,
\[
\begin{split}
(\Gamma(A)\cup_{T2} \Gamma(B))(x)(t) & =\Gamma(A)(x)(t)\vee \Gamma(B)(x)(t)\\
& =A(x)\vee B(x)\\
& =(A\cup_F B)(x)\\
& =\Gamma(A\cup_F B)(x)(t),
\end{split}
\]
for any $A,B\in \FS(X)$, $x\in X$ and $t\in [0,1]$. Similarly, $\Gamma(A)\cap_{T2} \Gamma(B)=\Gamma(A\cap_F B)$.
So $\Gamma$ is a lattice map.
\end{proof}

We may provide a different solution. In Section \ref{CVFS} we treat the problem of seeing $\FS$s as $\SVFS$s by means of the so-called $\CVFS$s. Here we give an embedding from $\CVFS(X)$ into $\T2FS(X)$. Consequently, its composition with any of the maps described in Remark \ref{remCVFS} yields a lattice extension of $\FS$s as $\T2FS$s preserving Zadeh's union and intersection.
Let us define a map $\delta:\mathcal{C}([0,1])\to [0,1]^{[0,1]}$ as follows. For any nonempty closed set $C$,
$$\delta(C)(t)=\left \{\begin{array}{ll} \underline{C} & \text{if $t\leq \underline{C}$,} \\
t & \text{if $t> \underline{C}$ and $t\in C$,}\\
0 & \text{if $t> \underline{C}$ and $t\notin C$,}
\end{array} \right.$$
for any $t\in [0,1]$. The map $\delta$ is injective. Indeed, given two closed sets $C$ and $D$, if $\delta(C)=\delta(D)$, then $t\in C$ if and only if $t=\delta(C)(t)=\delta(D)(t)$ if and only if $t\in D$.

\begin{rmk}
The map $\delta$ is well-defined when applying to arbitrary nonempty subsets. However, in this case, it is no longer injective. For instance, the sets $S=\{ 1/n \text{ with } n\geq 1\}$ and $T=S\cup\{0\}$ share the same image under $\delta$.
\end{rmk}

\begin{prop}\label{pro23}
The map $\delta:(\mathcal{C}([0,1]), \cup_C,\cap_C)\to ([0,1]^{[0,1]},\cup_F,\cap_F)$  is a lattice embedding.
\end{prop}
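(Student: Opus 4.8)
The plan is to exploit the injectivity of $\delta$, already verified just above the statement, and reduce the proposition to checking that $\delta$ preserves the two lattice operations pointwise. Since both $\cup_C$ and $\cap_C$ are defined symmetrically in $S$ and $T$ according to which of $\underline{S},\underline{T}$ is smaller, and the target operations $\cup_F,\cap_F$ are symmetric as well, I would assume without loss of generality that $\underline{S}\leq \underline{T}$ throughout. The goal then splits into the two identities $\delta(S\cup_C T)=\delta(S)\cup_F\delta(T)$ and $\delta(S\cap_C T)=\delta(S)\cap_F\delta(T)$; because $\cup_F$ and $\cap_F$ act as the pointwise maximum and minimum, each identity amounts to a value-by-value comparison of functions on $[0,1]$.

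For the join I would first recall from the proof of Theorem \ref{operatorclosed} that $U:=S\cup_C T=(S\cap[\underline{T},\overline{S}])\cup T$ has $\underline{U}=\underline{T}$. The decisive simplification is that for $t>\underline{T}$ one has $t\in S\cap[\underline{T},\overline{S}]$ exactly when $t\in S$, since membership in $S$ forces $t\leq\overline{S}$ automatically; hence $t\in U\iff t\in S\cup T$ in this range. I would then run a three-region case analysis in $t$: for $t\leq\underline{S}$ both $\delta(S)(t)$ and $\delta(T)(t)$ equal the respective minima, and their maximum is $\underline{T}=\delta(U)(t)$; for $\underline{S}<t\leq\underline{T}$ the value $\delta(S)(t)$ is either $t$ or $0$, hence dominated by $\underline{T}=\delta(T)(t)$, so the maximum is again $\underline{T}=\delta(U)(t)$; and for $t>\underline{T}$ the value is $t$ when $t\in S\cup T$ and $0$ otherwise, which is precisely $\max(\delta(S)(t),\delta(T)(t))$.

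For the meet I would proceed analogously, recalling that $V:=S\cap_C T=S\cap([\underline{S},\underline{T}]\cup T)$ has $\underline{V}=\underline{S}$, together with the parallel membership simplification that for $\underline{S}<t\leq\underline{T}$ one has $t\in V\iff t\in S$ (since $t\in[\underline{S},\underline{T}]$ already), while for $t>\underline{T}$ one has $t\in V\iff t\in S\cap T$ (since then $t\notin[\underline{S},\underline{T}]$). The same three-region split in $t$ matches $\delta(V)(t)$ against $\min(\delta(S)(t),\delta(T)(t))$: on $t\leq\underline{S}$ the minimum is $\underline{S}=\delta(V)(t)$; the clamping of $\delta(T)$ at $\underline{T}$ on the middle region is exactly what forces the minimum to equal $\delta(S)(t)=\delta(V)(t)$ there; and above $\underline{T}$ the minimum is $t$ precisely when $t$ lies in both sets, i.e.\ when $t\in V$.

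The only genuinely delicate point, and the one I would treat most carefully, is the behaviour on the middle region $\underline{S}<t\leq\underline{T}$, where $\delta$ replaces the value of $T$ by its minimum $\underline{T}$ while keeping the value of $S$. Here one must verify that this forced value dominates for the join and is dominated for the meet, which rests on the elementary bound $\delta(S)(t)\leq t$ valid whenever $t>\underline{S}$, so that $\delta(S)(t)\leq t\leq\underline{T}=\delta(T)(t)$ on that region. Once this region is settled, the regions $t\leq\underline{S}$ and $t>\underline{T}$ are routine, and combining the two preservation identities with the already-established injectivity shows that $\delta$ is a lattice embedding.
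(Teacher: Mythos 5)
Your proposal is correct and follows essentially the same route as the paper's proof: assume $\underline{S}\leq\underline{T}$ without loss of generality, establish the key membership identity that above $\underline{T}$ (resp.\ above $\underline{S}$) the join $S\cup_C T$ agrees with $S\cup T$ (resp.\ the meet with the appropriate trace of $S$ and $S\cap T$), and then compare $\delta$-values pointwise by regions, with injectivity quoted from the argument preceding the statement. The only difference is cosmetic: the paper derives the membership identity from the displayed set equality $[\underline{D},\overline{C}\vee\overline{D}]\cap(C\cup D)=C\cup_C D$ and leaves the meet case as ``similar,'' whereas you argue membership directly and write out the meet case in full.
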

\begin{proof}
Let $C$ and $D$ be nonempty closed sets in $[0,1]$.  We prove first that $\delta(C\cup_C D)=\delta(C)\cup_{F}\delta(D)$. Without loss of generality, we may suppose that $\underline{C}\leq \underline{D}$. Then $C\cup_C D=(C\cap [\underline{D},\overline{C}])\cup D$ and $\underline{C\cup_C D}=\underline{D}$.  Observe   that
\[
\begin{split}
[\underline{D},\overline{C}\vee \overline{D}]\cap (C\cup D)  
  & =([\underline{D},\overline{C}\vee \overline{D}]\cap C)\cup  D\\
  & = ([\underline{D},\overline{C}]\cap C)\cup  D \\
  & = C\cup_C D.
\end{split}
\]
Then, whenever $\underline{D}\leq t \leq \overline{C}\vee \overline{D}$, $t\in C\cup D$ if and only if $t\in C\cup_C D$. Hence, for any $t\in [0,1]$,
\[
\begin{split}
\delta(C\cup_C D)(t)& =\left \{\begin{array}{ll} \underline{D} & \text{if $t\leq \underline{D}$,} \\
t & \text{if $\underline{D} <t$ and $t\in C\cup_CD$,}\\
0 & \text{if $\underline{D} <t$ and $t\notin C\cup_CD$}\\
\end{array} \right.\\
&  =\left \{\begin{array}{ll} \underline{D} & \text{if $t\leq \underline{D}$,} \\
t & \text{if $\underline{D} <t$ and $t\in C\cup D$,}\\
0 & \text{if $\underline{D} <t$ and $t\notin C\cup D$}\\
\end{array} \right.\\
& = \delta(C)(t) \vee \delta(D)(t)\\
& = (\delta(C)\cup_F \delta(D))(t).
\end{split}
\]
The equality $\delta(C\cap_C D)=\delta(C)\cap_{F}\delta(D)$ can be proved similarly.
\end{proof}
Therefore, by applying (\ref{func2}), we get the injective lattice map 
$$\Delta:(\CVFS(X),\sqcup_C,\sqcap_C) \to (\T2FS(X),\sqcup_{T2},\sqcap_{T2})$$ defined pointwise from $\delta$. Therefore, from Proposition \ref{pro23}, we obtain the following.\begin{cor}\label{corT2}
The composition of lattice maps 
$$\xymatrix{(\FS(X),\cup_F,\cap_F) \ar[d]^-{\square} \ar@{.>}[r] & (\T2FS(x),\sqcup_{T2},\sqcap_{T2}) \\(\IVFS(X),\sqcup_I,\sqcap_I) \ar[r]^-{i} & (\CVFS(X),\sqcup_C,\sqcap_C)\ar[u]^-{\Delta},}$$
where $\square$ is any of the embeddings in Section \ref{IVFS}, provides a family of lattice embeddings from $\FS(X)$ to $\T2FS(X)$.
\end{cor}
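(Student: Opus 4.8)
The plan is to recognize the dotted arrow in the diagram as the composition $\Delta\circ i\circ\square:\FS(X)\to\T2FS(X)$ of three maps, each of which is already known to be a lattice embedding, and then to invoke the fact that injective lattice homomorphisms are stable under composition. So essentially all the work has been done upstream, and the corollary is a bookkeeping exercise.

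First I would confirm that $\Delta$ itself is a lattice embedding. By construction $\Delta$ is defined pointwise from $\delta$, i.e.\ it is the image of $\delta$ under the functor (\ref{func2}) applied with $S=X$. Proposition \ref{pro23} tells us that $\delta:(\mathcal{C}(\cerouno),\cup_C,\cap_C)\to(\cerouno^{\cerouno},\cup_F,\cap_F)$ is an injective lattice homomorphism, and since (\ref{func2}) is an endofunctor of $\Lat$ that preserves monomorphisms (as recalled in Section \ref{fuzzy} and in the discussion following it), the induced map $\Delta:(\CVFS(X),\sqcup_C,\sqcap_C)\to(\T2FS(X),\sqcup_{T2},\sqcap_{T2})$ is again an injective lattice homomorphism, hence a lattice embedding.

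Next I would recall the two remaining factors. The map $i:(\IVFS(X),\sqcup_I,\sqcap_I)\to(\CVFS(X),\sqcup_C,\sqcap_C)$ was shown to be a lattice embedding just before Remark \ref{remCVFS}: the operators $\sqcup_C,\sqcap_C$ restrict to $\sqcup_I,\sqcap_I$ on interval-valued degrees, and $i$ is injective because a closed interval is recovered from the closed set it determines. Likewise, each map $\square$ of Section \ref{IVFS} (for instance $\Phi$, $\Omega$ or $\Lambda$) is a lattice embedding $(\FS(X),\cup_F,\cap_F)\to(\IVFS(X),\sqcup_I,\sqcap_I)$. Composing, $\Delta\circ i\circ\square$ is injective as a composition of injections and a lattice homomorphism as a composition of lattice homomorphisms, hence a lattice embedding; letting $\square$ range over the embeddings of Section \ref{IVFS} yields the asserted family.

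I do not expect a genuine obstacle here, since the substantive content has been front-loaded into Proposition \ref{pro23} and the functoriality of (\ref{func2}). The only point deserving a moment's care is the verification that $\Delta$ inherits both injectivity and the preservation of $\sqcup$ and $\sqcap$ from $\delta$ — but this is precisely what the phrase ``apply (\ref{func2})'' guarantees, so it can be stated in a single sentence rather than re-proved pointwise.
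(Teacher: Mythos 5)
Your proposal is correct and follows essentially the same route as the paper: the paper also obtains $\Delta$ by applying the functor (\ref{func2}) to the lattice embedding $\delta$ of Proposition \ref{pro23}, using that this functor preserves monomorphisms, and then presents the corollary as the immediate composition of the lattice embeddings $\square$, $i$ and $\Delta$. There is no gap; the only substantive inputs are exactly the ones you cite.
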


\begin{rmk}
Let $f:[0,1]\to [0,1]$ be a strictly increasing map with $f(0)=0$ and $f(1)=1$. We may modify $\delta$ and define $\delta_f:\mathcal{C}([0,1])\to [0,1]^{[0,1]}$ as, for any $C\in \mathcal{C}([0,1])$,
$$\delta_f(C)(t)=\left \{\begin{array}{ll} f(\underline{C}) & \text{if $t\leq \underline{C}$,} \\
f(t) & \text{if $t> \underline{C}$ and $t\in C$,}\\
0 & \text{if $t> \underline{C}$ and $t\notin C$,}
\end{array} \right.$$
for any $t\in [0,1]$. This is also an injective lattice map from $(\mathcal{C}([0,1]), \cup_C,\cap_C)$ to $([0,1]^{[0,1]},\cup_F,\cap_F)$. Therefore it also defines a lattice embedding $\Delta_f$ from $(\CVFS(X),\sqcup_C,\sqcap_C)$ to $(\T2FS(X),\sqcup_{T2},\sqcap_{T2})$. Thus, Corollary \ref{corT2} also remains valid if we change $\Delta$ by $\Delta_f$.
\end{rmk}
In Figure \ref{FigT2} we show examples of how a $\FS$ can be viewed as a $\T2FS$ by means of some of these embeddings.

\begin{figure*}[tb]
\begin{center}
\subfigure[$\square(A)(x)=\{A(x)\}$]{
\begin{tikzpicture}[scale=2.9]
\draw[densely dotted] (0.5,0) -- (0.5,0.5);
\draw[dashed, ->] (0,-0.1) -- (0,1.1); 
\draw[dashed, ->] (-.1,0) -- (1.1,0); 
\node at (-.05,1) {\scriptsize{$1$}};
\node at (0.5,-0.1) {\scriptsize{$A(x)$}};
\node at (-0.1,0.5) {\scriptsize{$A(x)$}};
\node at (1,-0.05) {\scriptsize{$1$}};
\node at (-.05,-0.05) {\scriptsize{$0$}};
\draw[black, line width=1.5pt] (0.5,0) -- (1,0);
\draw[black, line width=1.5pt] (0,0.5) -- (0.5,0.5);
\filldraw (0.5,0) node [circle,fill=white, draw=black,  inner sep=1pt]{};
\end{tikzpicture}}\hspace{0.2cm}
\subfigure[$\square(A)(x)=\lbrack 0,A(x)\rbrack$]{
\begin{tikzpicture}[scale=2.9]
\draw[densely dotted] (0.5,0) -- (0.5,0.5);
\draw[densely dotted] (0,0.5) -- (0.5,0.5);
\draw[dashed, ->] (0,-0.1) -- (0,1.1); 
\draw[dashed, ->] (-.1,0) -- (1.1,0); 
\node at (-.05,1) {\scriptsize{$1$}};
\node at (0.5,-0.1) {\scriptsize{$A(x)$}};
\node at (-0.1,0.5) {\scriptsize{$A(x)$}};
\node at (1,-0.05) {\scriptsize{$1$}};
\node at (-.05,-0.05) {\scriptsize{$0$}};
\draw[black, line width=1.5pt] (0.5,0) -- (1,0);
\draw[black, line width=1.5pt] (0,0) -- (0.5,0.5);
\filldraw (0.5,0) node [circle,fill=white, draw=black,  inner sep=1pt]{};
\end{tikzpicture}}
\hspace{0.2cm}
\subfigure[$\square(A)(x)=\lbrack A(x) , 1\rbrack$]{
\begin{tikzpicture}[scale=2.8]
\draw[densely dotted] (0.5,0) -- (0.5,0.5);
\draw[densely dotted] (1,0) -- (1,1);
\draw[densely dotted] (0,1) -- (1,1);
\draw[dashed, ->] (0,-0.1) -- (0,1.1); 
\draw[dashed, ->] (-.1,0) -- (1.1,0); 
\node at (-.05,1) {\scriptsize{$1$}};
\node at (0.5,-0.1) {\scriptsize{$A(x)$}};
\node at (-0.1,0.5) {\scriptsize{$A(x)$}};
\node at (1,-0.05) {\scriptsize{$1$}};
\node at (-.05,-0.05) {\scriptsize{$0$}};
\draw[black, line width=1.5pt] (0,0.5) -- (0.5,0.5);
\draw[black, line width=1.5pt] (0.5,0.5) -- (1,1);
\end{tikzpicture}}
\hspace{0.2cm}
\subfigure[$\square(A)(x)=\lbrack A(x) , b(x) \rbrack$ \newline $b(x)=\log_2(A(x)+1)$ ]{
\begin{tikzpicture}[scale=2.8]
\draw[densely dotted] (0.5,0) -- (0.5,0.5);
\draw[densely dotted] (0.75,0) -- (0.75,0.75);
\draw[densely dotted] (0,0.75) -- (0.75,0.75);
\draw[dashed, ->] (0,-0.1) -- (0,1.1); 
\draw[dashed, ->] (-.1,0) -- (1.1,0); 
\node at (-.05,1) {\scriptsize{$1$}};
\node at (0.5,-0.1) {\scriptsize{$A(x)$}};
\node at (-0.1,0.5) {\scriptsize{$A(x)$}};
\node at (0.75,-0.1) {\scriptsize{$b(x)$}};
\node at (-0.1,0.75) {\scriptsize{$b(x)$}};
\node at (1,-0.05) {\scriptsize{$1$}};
\node at (-.05,-0.05) {\scriptsize{$0$}};
\draw[black, line width=1.5pt] (0,0.5) -- (0.5,0.5);
\draw[black, line width=1.5pt] (0.5,0.5) -- (0.75,0.75);
\draw[black, line width=1.5pt] (0.75,0) -- (1,0);
\filldraw (0.75,0) node [circle,fill=white, draw=black,  inner sep=1pt]{};
\end{tikzpicture}}
\end{center}
\caption{Membership degrees of a fuzzy set $A$ viewed as a type-2 fuzzy set through some of the compositions of Corollary \ref{corT2}.}\label{FigT2}
\end{figure*}
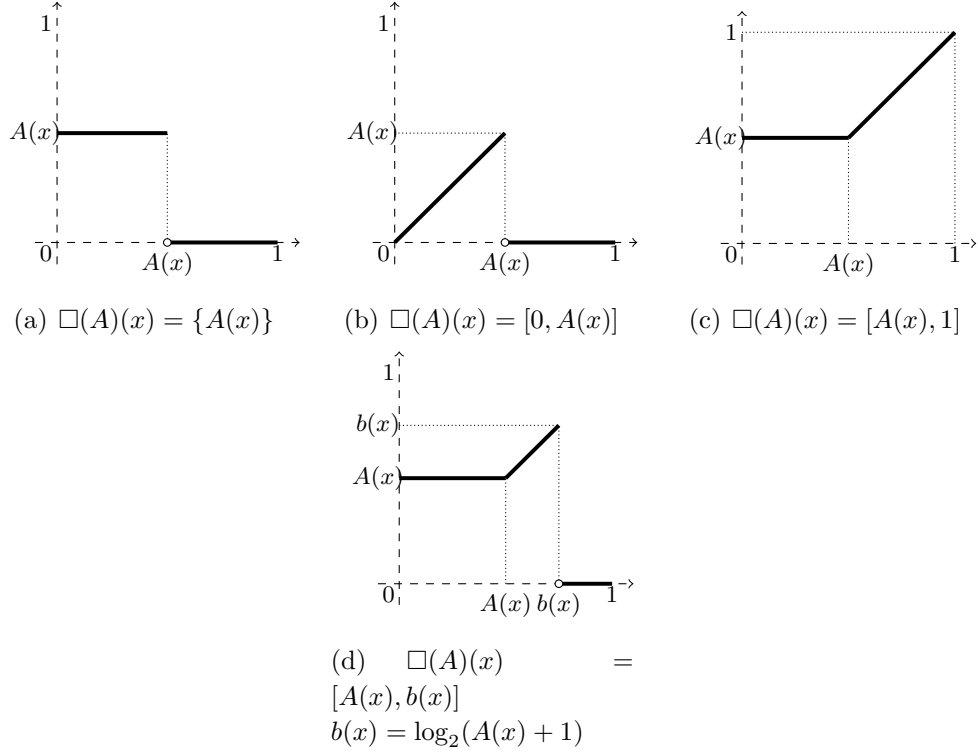

\section{Conclusion}
In this paper we have analyzed the relationships, as lattices, of the following types of fuzzy sets: $\FS$s, $\IVFS$s, $\SVFS$s and $\T2FS$s. The use of the language of category theory has allowed us to understand the underlying mathematical problem of inserting $\FS$s inside $\SVFS$s and $\T2FS$s. The solutions that we have shown are then in accord with this viewpoint. It  seems more convenient, mathematically speaking, to exchange the way of seeing $\FS$s as $\SVFS$ or $\T2FS$ rather than substituting the standard lattice structures. We think that this way of reasoning may be extrapolated to other models of handling uncertainty and imperfect information.

On the other hand, the success of $\HFS$s suggests that, in some particular situations, it may be convenient to modify the usual lattice structures. In this sense, we have adjusted the meet operator defined by Torra for $\HFS$s and defined $\CVFS$s. This generalization covers most of the applications of $\HFS$s, since it may handle $\THFS$s and $\IVFS$s. Additionally, $\CVFS$s over a fixed universe set become a lattice.

\end{document}